\def\@ACM@checkaffil{
    \if@ACM@instpresent\else
    \ClassWarningNoLine{\@classname}{No institution present for an affiliation}%
    \fi
    \if@ACM@citypresent\else
    \ClassWarningNoLine{\@classname}{No city present for an affiliation}%
    \fi
    \if@ACM@countrypresent\else
        \ClassWarningNoLine{\@classname}{No country present for an affiliation}%
    \fi
}
\newtheorem{definition}{Definition}
\newtheorem{proposition}{Proposition}
\newtheorem{example}{Example}
\newtheorem{problem}{Problem}
\newcommand{\NN}{\mathcal{N}\!\mathcal{N}}
\newcounter{NoTableEntry}
\renewcommand*{\theNoTableEntry}{NTE-\the\value{NoTableEntry}}
\renewcommand\footnotetextcopyrightpermission[1]{}
\title{
BERN-NN: 
Tight Bound Propagation For Neural Networks Using Bernstein Polynomial Interval Arithmetic}
\author{Wael Fatnassi}\authornote{Both authors contributed equally to the paper}
\affiliation{
	\institution{University of California, Irvine}
	\department{Dept. of Electrical Engineering and Computer Science}
}
\email{wfatnass@uci.edu}
\author{Haitham Khedr}\authornotemark[1]
\affiliation{
	\institution{University of California, Irvine}
	\department{Dept. of Electrical Engineering and Computer Science}
}
\email{hkhedr@uci.edu}
\author{Valen Yamamoto}
\affiliation{
	\institution{University of California, Irvine}
	\department{Dept. of Electrical Engineering and Computer Science}
}
\email{vyamamot@uci.edu}
\author{Yasser Shoukry}
\affiliation{
	\institution{University of California, Irvine}
	\department{Dept. of Electrical Engineering and Computer Science}
}
\email{yshoukry@uci.edu}
\keywords{Neural Networks, Bernstein Polynomials, Abstraction Refinement}
\begin{document}\sloppy


\begin{abstract}
In this paper, we present BERN-NN as an efficient tool to perform bound propagation of Neural Networks (NNs). Bound propagation is a critical step in wide range of NN model checkers and reachability analysis tools. Given a bounded input set, bound propagation algorithms aim to compute tight bounds on the output of the NN. So far, linear and convex optimizations have been used to perform bound propagation. Since neural networks are highly non-convex, state-of-the-art bound propagation techniques suffer from introducing large errors. To circumvent such drawback, BERN-NN approximates the bounds of each neuron using a class of polynomials called Bernstein polynomials. Bernstein polynomials enjoy several interesting properties that allow BERN-NN to obtain tighter bounds compared to those relying on linear and convex approximations. BERN-NN is efficiently parallelized on graphic processing units (GPUs). Extensive numerical results show that bounds obtained by BERN-NN are orders of magnitude tighter than those obtained by state-of-the-art verifiers such as linear programming and linear interval arithmetic. Moreoveer, BERN-NN is both faster and produces tighter outputs compared to convex programming approaches like alpha-CROWN. 
\end{abstract}
\maketitle
\pagestyle{plain}

\section{Introduction}
Neural Networks (NNs) have become an increasingly central component of modern, safety-critical, cyber-physical systems like autonomous driving, autonomous decision-making in smart cities, and even autonomous landing in avionic applications.
Thus, there is an increasing need to verify the safety and correctness ~\cite{sun2019formal,sun2021provably,fremont2020formal} of NNs when they are used to control physical systems.

The problem of NN Verification has been well studied in literature~\cite{liu2021algorithms}. Most NN verifiers rely mainly on either using linear relaxation and optimization~\cite{wang2018efficient, dvijotham2018dual, wong2017provable,henriksen2021deepsplit,khedr2021peregrinn,wang2021beta} to falsify a given property or prove its satisfaction, or reachability analysis to compute an over-approximation of the output set. The latter is specifically important for control applications where the property of interest is defined over a time horizon. Both techniques rely on overapproximation, hence, having tight output bounds is at the core of NN verification as it allows reasoning about NN properties in an efficient manner. For example, model checking the robustness of NNs against adversarial perturbations can be done by simply comparing the tight bounds of the outputs of the network. Moreover, networks used in control applications often involve multi-step reachability, and hence computing tight bounds is crucial to harness the accumulation of the error and hence be able to efficiently reason about the safety of the system.

Due to the non-convexity and non-linearity of NNs, the problem of finding the exact bounds of NN outputs is NP-hard\cite{KatzReluplexEfficientSMT2017a}. Different tools have been proposed to find tight overapproximations of NN outputs. MILP-based methods~\cite{dutta2019sherlock,lomuscio2017approach, tjeng2017evaluating, bastani2016measuring, bunel2020branch, fischetti2018deep, anderson2020strong, cheng2017maximum} encode the non-linear activations as linear and integer constraints. Reachability methods~\cite{xiang2017reachable, xiang2018output, gehr2018ai2, wang2018formal, tran2020nnv, ivanov2019verisig, fazlyab2019efficient} use layer-by-layer reachability analysis (exact or overapproximation) of the network. Most of these methods either rely on convex \emph{linear relaxation} of the non-linear activation functions to overapproximate the output of the NN, or try to find the exact bounds which are often intractable.

In this work, we explore using polynomials to approximate non-linear activations (e.g. ReLU). More specifically, we approximate non-linear activations using Bernstein polynomials which are constructed as a linear combination of the Bernstein basis polynomials~\cite{farouki2012bernstein}. The use of Bernstein polynomials is motivated by two reasons. First, based on the Stone-Weierstrass approximation theorem~\cite{de1959stone}, Bernstein polynomials can uniformly approximate continuous activation functions. Second and most importantly, bounding a Bernstein polynomial is computationally cheap based on the interesting properties of Bernstein polynomials discussed in section \ref{sec:motivation}. The goal of using higher-order polynomials versus linear relaxation is to get tight bounds on NNs which is crucial for verifying a large class of formal properties. This idea of using polynomials has inspired other researchers~\cite{dutta2019reachability,fan2020reachnn,huang2022polar}, however, the proposed tools suffer from scalability issues.

Our main contributions can be summarized as follows:
\begin{itemize}
    \item We propose a tool that uses Bernstein polynomials to approximate ReLU activations and hence compute tighter NN bounds than state-of-the-art.
    \item The tool is designed with scalability in mind; hence, the entire operations can be accelerated using GPUs.
    \item We show that by using the proposed approximation, we are able to compute tighter output sets than alpha-Crown (winner of VNN22' competition\cite{bak2021second} for Formal Verification of NNs) and other state-of-the-art bounding methods. For instance, BERN-NN approximations are twice reduced compared to alpha-Crown for actual NN's controllers. Moreover, Numerical results showed that Bern-NN can process neural networks with more than 1000 neurons in less than 2 minutes
\end{itemize}

\section{Problem Formulation}
\subsection{Notation:}
\noindent \textbf{General notation:}
We use the symbols $\mathbb{N}$ and $\mathbb{R}$ to denote the set of natural and real numbers, respectively.
We denote by $x=\big(x_1,x_2,\cdots,x_n\big) \in \mathbb{R}^n$ the vector of $n$ real-valued variables, where $x_i \in \mathbb{R}$. We denote by $I_n (\underline{d}, \overline{d}) =\big[\underline{d}_1,\overline{d}_1\big] \times \cdots \times$ $\big[\underline{d}_n,\overline{d}_n\big] \subset \mathbb{R}^{n}$ the $n$-dimensional hyperrectangle where $\underline{d} = \left(\underline{d}_1, \cdots, \underline{d}_n\right)$ and $\overline{d} = \left(\overline{d}_1, \cdots, \overline{d}_n\right)$ are the lower and upper bounds of the hyperrectangle, respectively. 
We denote by $x^T$ and $A^T$ the transpose operation of the vector $x$ and the matrix $A$. We denote by $0_n$ a vector that contains $n$ zero values and by $0_{n \times m}$ the matrix of shape $n \times m$ that contains zeros. Finally, $A * B$ stands for the element-wise product between the multi-dimensional tensors $A$ and $B$, and $A \otimes B$ stands for the Kronecker product between the matrices $A$ and $B$.

\noindent \textbf{Notation pertaining to multivariate polynomials:}
For a real-valued vector $x =\big(x_1,x_2,\cdots,x_n\big)\in \mathbb{R}^n$ and an index-vector $K = \left(k_1, \cdots, k_n\right) \in \mathbb{N}^n$, we denote by $x^K \in \mathbb{R}$ the scalar $x^K = x_1^{k_1} \times \ldots \times x_n^{k_n}$. 
Given two multi-indices $K = \left(k_1, \cdots, k_n\right) \in \mathbb{N}^n$ and $L = \left(l_1, \cdots, l_n\right) \in \mathbb{N}^n$, we use the following notation throughout this paper: 
\begin{align*}
K + L &= \left(k_1+l_1, \cdots, k_n + l_n\right), \\
{L \choose K} & ={l_1 \choose k_1} \times \cdots \times {l_n \choose k_n}, \\
\sum\limits_{K \leq L} & = \sum\limits_{k_1 \leq l_1}^{}\cdots \sum\limits_{k_n \leq l_n}
\end{align*}
%
Finally, a real-valued multivariate polynomial $p:\mathbb{R}^n \rightarrow \mathbb{R}$ is defined as:
\begin{align*}
 p(x_1, \ldots, x_n) & \;=\; \sum_{k_1 = 0}^{l_1 } \sum_{k_2 = 0}^{l_2} \ldots \sum_{k_n = 0}^{l_n} a_{(k_1,\ldots,k_n)} x_1^{k_1} x_2^{k_2} \ldots x_n^{k_n} \nonumber \\
 &\;=\;\sum\limits_{K \leq L} a_K x^K,
\end{align*}
where $L = (l_1, l_2, \ldots, l_n)$ is the maximum degree of $x_i$ for all $i = 1, \ldots, n$. 

\noindent \textbf{Notation pertaining to neural networks:}
In this paper, we consider $H$-layer, feed-forward, ReLU-based neural networks $\mathcal{N}\!\mathcal{N}: \mathbb{R}^n \rightarrow \mathbb{R}^o$ defined as:
\begin{align*}
    \mathcal{N}\!\mathcal{N}(x) &= W^{(H)} z^{(H-1)} + b^{(H)} \\
    z^{(H-1)} &= \sigma\left(W^{(H-1)} z^{(H-2)} + b^{(H-1)} \right) \\
    \vdots \\
    z^{(1)} &= \sigma\left(W^{(1)} x + b^{(1)} \right)
\end{align*}
where $\sigma$ is the ReLU activation function (i.e., $\sigma(z) = \max(0,z)$) that operates element-wise, $W^{(i)} \in \mathbb{R} ^ {h_i \times h_{i - 1}}$ and $b^{(i)} \in \mathbb{R}^{h_i}$ with $i \in \{1, \cdots, H\}$ are the weights and the biases of the network. For simplicity of notation, we use $\hat{z}^{(i)}_{j}$ and $z^{(i)}_{j}$ to denote the pre-activation (input) and the post-activation (output) of the $j$-th neuron in the $i$-th layer.


\subsection{Main Problem:} In this paper, we seek to find 
polynomials that upper and lower approximate the NN's outputs $\NN(x)$ whenever the NN's input $x$ is confined within a pre-defined hypercube, i.e. $x \in I_n (\underline{d}, \overline{d})$.

\begin{problem}
Given a neural network $\NN:\mathbb{R}^n \rightarrow \mathbb{R}^o$ and an input domain hypercube $I_n (\underline{d}, \overline{d}) \subset \mathbb{R}^n$. Find lower and upper approximate polynomials $\left(\underline{p}_{\NN,1}(x), \overline{p}_{\NN,1}(x)\right), \ldots \left(\underline{p}_{\NN,o}(x), \overline{p}_{\NN,o}(x)\right)$, such that:
\begin{align*}
\underline{p}_{\NN,1}(x) & \leq \NN_1(x) \leq \overline{p}_{\NN,1}(x) \\
& \qquad \qquad \vdots \\
\underline{p}_{\NN,o}(x) & \leq \NN_o(x) \leq \overline{p}_{\NN,o}(x), 
\end{align*}
where with some abuse of notation, we use $\NN_i(x)$ to denote the $i$th output of the neural network $\NN$.

\end{problem}
Note that the lower/upper bound polynomials $\left(\underline{p}_{\NN,1}(x), \overline{p}_{\NN,1}(x)\right), \ldots \left(\underline{p}_{\NN,o}(x), \overline{p}_{\NN,o}(x)\right)$ depend on the input domain $I_n$. That is, for each value of $I_n$, we need to find different lower/upper bound polynomials. However, for the sake of simplicity of notation, we drop the dependency on $I_n$.

\section{Tight bounds of ReLU Functions Using Bernstein Polynomials}
\label{sec:motivation}
To solve Problem 1, we rely on a class of polynomials called Bernstein polynomials which are defined as follows: 
\begin{definition} (Bernstein Polynomials) 
Given a continuous function $g:\mathbb{R}^n \rightarrow \mathbb{R}$, an input domain (hypercube)  $I_n(\underline{d}, \overline{d}) \subset \mathbb{R}^n$, and a multi-index $L = \left(l_1, \cdots, l_n\right) \in \mathbb{N}^n$, the polynomial: 
\begin{align}
B_{g, L}\left(x\right) &= \sum\limits_{K \leq L}^{} b^{g}_{K,L} Ber_{K, L}\left(x\right), \label{bernpol} \\
Ber_{K, L}\left(x\right) &= {L \choose K} \frac{\left(x - \underline{d}\right)^{K}\left(\overline{d} - x\right)^{L-K}}{\left( \overline{d} - \underline{d}\right)^L}, \label{bernpolcoeff}\\ 
b^{g}_{K, L} &= g\bigg(\left(\overline{d}_1 - \underline{d}_1\right) \frac{k_1}{l_1} + \underline{d}_1, \cdots, \left(\overline{d}_n - \underline{d}_n\right) \frac{k_n}{l_n} + \underline{d}_n \bigg), \label{bernpolcoeff2}
\end{align}
is called the $L$th order Bernstein polynomial of $g$, where $Ber_{K, L}\left(x\right)$ and $b^g_{K, L}$ are called the Bernstein basis and Bernstein coefficients of $g$, respectively.
\end{definition}

Bernstein polynomials are known to be capable of approximating any continuous function. That is, Bernstein approximation has an advantage compared to Taylor approximation because the latter relies on the function being differentiable. In this case, Taylor model can not approximate ReLU activation functions because they are not differentiable which makes Bernstein polynomials a good option to approximate ReLU functions. Bernstein polynomials have an interesting and useful property called \textit{range enclosing property} which is defined as follows:

\begin{definition}(Range Enclosing Property \cite{range_enclose_prop})
Given a multi-dimensional polynomial $p\left(x\right)$ of order $L$ that it defined over the region $I_n\left(\underline{d}, \overline{d}\right)$ with its Bernstein polynomial $B_{p, L} = \sum\limits_{K \leq L}b^{p}_{K, L}\left(x\right)Ber_{K, L}\left(x\right)$. The following holds for all $x \in I_n\left(\underline{d}, \overline{d}\right)$:
\begin{align}\label{range_enclosing_property}
 \min\limits_{K \leq L} b^{p}_{K, L} \; \leq \; p\left(x\right) \; \leq \; \max\limits_{K \leq L} b^{p}_{K, L}.   
\end{align}
\end{definition}

The range enclosing property states that the minimum (maximum) over all the Bernstein coefficients is a lower (upper) bound for the polynomial $p$ over the region $I_n\left(\underline{d}, \overline{d}\right)$. These bounds provided by the Bernstein coefficients are generally tighter than those given by interval arithmetic and many centered forms~\cite{bern_better_ia}. Note that the range enclosing property applies only when the Bernstein polynomial is used to approximate other polynomials $p$ and other continuous functions $g$. Nevertheless, as we show in Section 4, these bounds will be helpful to provide tight bounds on the polynomials used to over/under approximate the individual neurons and hence obtain tight polynomial bounds on the NN's outputs. 

\begin{figure}[!t]
\centering
	\includegraphics[width=0.235\textwidth,trim={16mm 0 16mm 12mm},clip]{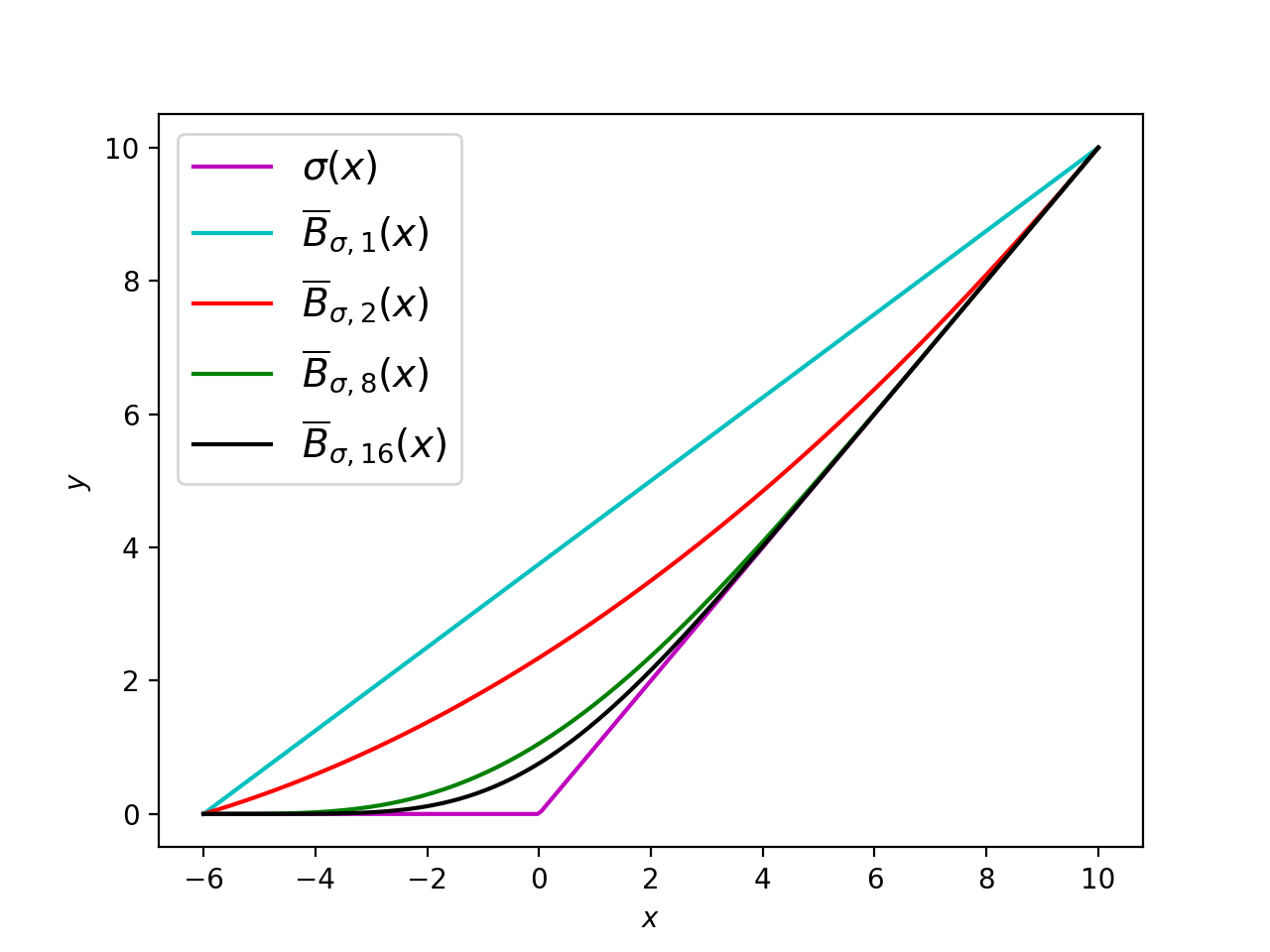} 
	\includegraphics[width=0.235\textwidth,trim={16mm 0 16mm 12mm},clip]{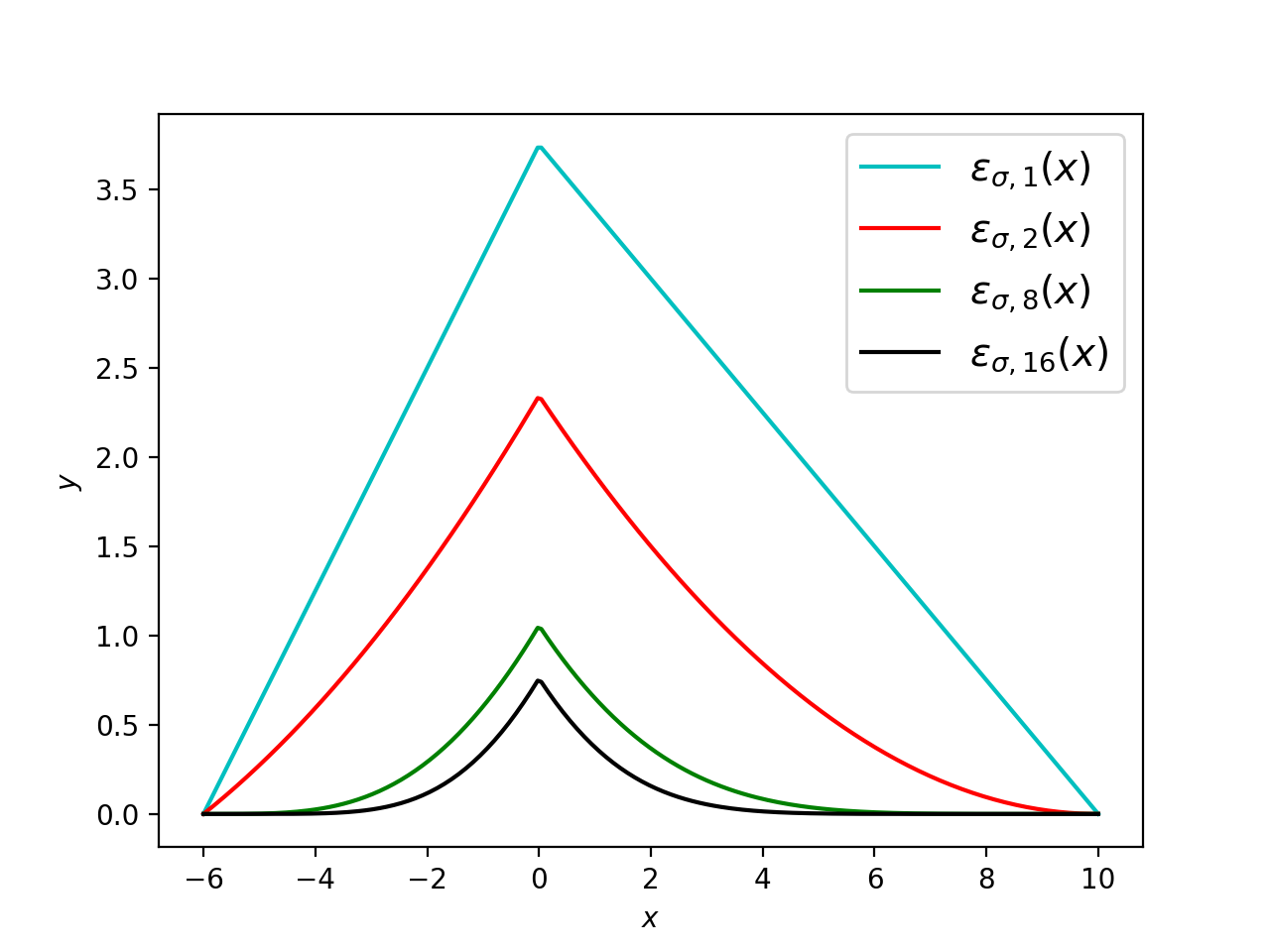} 
	\caption{\textbf{(Top)} Bernstein polynomial approximations of ReLU activation for different approximation's order $L \in \{1, 2, 8, 16\}$, in the interval $I_1\left(-6, 10\right) = \big [-6, 10\big ]$. \textbf{(Bottom)} Bernstein polynomial approximations of ReLU and their associated approximation errors for different approximation's order $L \in \{1, 2, 8, 16\}$ in the interval $I_1\left(-6, 10\right) = \big [-6, 10\big ]$.}
	\label{fig:ber_relu_order}
\end{figure}

\subsection{Over-Approximating ReLU functions using Bernstein Polynomials}

We now study how to use Bernstein polynomials to over-approximate the ReLU function $\sigma: \mathbb{R} \rightarrow \mathbb{R}$ defined as $\sigma(x) = \max(0,x)$.
While Bernstein polynomials can approximate any continuous function $g$, there is no guarantee that this Bernstein approximation is either over-approximation or under-approximation. The next result establishes an order between the ReLU function $\sigma$ and its Bernstein approximation.

\begin{proposition}\label{prop:over}
Given an interval $I_1\left(\underline{d}, \overline{d}\right) = \big[\underline{d}, \overline{d} \big]$, where $0 \in \big[\underline{d}, \overline{d} \big]$ and any approximation order $L \ge 1$. The following holds for all $x \in I_1$:
$$ \sigma(x) \le B_{\sigma,L}(x)= \overline{B}_{\sigma, L}(x).$$
\end{proposition}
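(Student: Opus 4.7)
The plan is to exploit the convexity of ReLU together with two classical properties of the Bernstein basis: pointwise non-negativity ($Ber_{k,L}(x) \ge 0$ on $[\underline{d},\overline{d}]$) and partition-of-unity ($\sum_{k=0}^{L} Ber_{k,L}(x) = 1$). Together these say that for each fixed $x \in [\underline{d},\overline{d}]$, the weights $\{Ber_{k,L}(x)\}_{k=0}^{L}$ form a probability distribution on the sample points $x_k := \underline{d} + (\overline{d}-\underline{d})k/L$, and $B_{\sigma,L}(x)$ is simply the expectation of $\sigma(x_k)$ under this distribution.

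Step one is to establish the linear-reproduction identity
\begin{equation*}
\sum_{k=0}^{L} x_k \, Ber_{k,L}(x) \;=\; x \qquad \text{for all } x \in [\underline{d},\overline{d}],
\end{equation*}
which is the standard fact that the Bernstein operator reproduces polynomials of degree $\le 1$ exactly; after the affine change of variables $t = (x-\underline{d})/(\overline{d}-\underline{d}) \in [0,1]$, it reduces to the well-known identity $\sum_{k=0}^{L} (k/L)\binom{L}{k} t^k (1-t)^{L-k} = t$. Step two is to invoke Jensen's inequality: because $\sigma = \max(0,\cdot)$ is convex on $\mathbb{R}$ and the Bernstein basis evaluated at $x$ gives convex-combination weights, the identity above yields
\begin{equation*}
\sigma(x) \;=\; \sigma\!\left(\sum_{k=0}^{L} x_k \, Ber_{k,L}(x)\right) \;\le\; \sum_{k=0}^{L} \sigma(x_k)\, Ber_{k,L}(x) \;=\; \sum_{k=0}^{L} b^{\sigma}_{k,L}\, Ber_{k,L}(x) \;=\; B_{\sigma,L}(x),
\end{equation*}
which is precisely the desired inequality.

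The only mildly subtle step is the linear-reproduction identity, but since the paper already relies on Bernstein machinery, this is standard toolkit. Everything else follows mechanically from convexity and partition-of-unity, and pleasantly no case analysis on the sign of $x$ or on which sample points $x_k$ happen to be positive is needed. I note that the hypothesis $0 \in [\underline{d},\overline{d}]$ is not logically required for correctness: if $0$ lies outside the interval, $\sigma$ agrees with a polynomial of degree $\le 1$ on $[\underline{d},\overline{d}]$ and the inequality holds with equality; the hypothesis simply pinpoints the nontrivial regime in which $\sigma$ is genuinely nonlinear on $I_1$, making the over-approximation meaningful.
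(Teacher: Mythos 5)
Your proof is correct, but it is a genuinely different (and far more substantive) argument than the one in the paper: the paper simply asserts that the claim ``follows directly by substituting $\sigma$ in the definition of Bernstein polynomials,'' offering no actual mechanism, whereas you supply the classical one --- non-negativity and partition of unity of the basis $Ber_{k,L}$ on $[\underline{d},\overline{d}]$, linear precision of the Bernstein operator ($\sum_k x_k\,Ber_{k,L}(x)=x$ with $x_k=\underline{d}+(\overline{d}-\underline{d})k/L$), and Jensen's inequality applied to the convex function $\sigma$, so that $\sigma(x)\le \sum_k \sigma(x_k)Ber_{k,L}(x)=B_{\sigma,L}(x)$ since $b^{\sigma}_{k,L}=\sigma(x_k)$ by the coefficient definition. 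Your route buys several things the paper's terse justification does not: it is airtight, it works verbatim for any convex activation and any order $L\ge 1$, it explains \emph{why} the over-approximation holds (convexity is the operative property, not anything ReLU-specific), and your closing observation is also right that the hypothesis $0\in[\underline{d},\overline{d}]$ is not needed for the inequality --- when $0$ lies outside the interval, $\sigma$ is affine there and linear precision gives equality --- so the hypothesis only marks the regime where the bound is strict and interesting. The one thing the paper's implicit viewpoint offers that yours does not need is a concrete computation of the coefficients for $\sigma$ itself, which is irrelevant to the inequality; your argument is the one that should be recorded.
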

\begin{proof}
This follows directly by substituting the function $\sigma$ in the definition of Bernstein polynomials~\eqref{bernpol}-\eqref{bernpolcoeff2}.
\end{proof}

In other words, Proposition~\ref{prop:over} states that the Bernstein polynomial of $\sigma$ is a guaranteed over-approximation of $\sigma$. This even holds \emph{for any approximation order $L$}. Moreover, since the approximation error between a function $g$ and its Bernstein approximation $B_{g,L}$ is known to decrease as $L$ increases~\cite{garloff}. Then another consequence of Proposition~\ref{prop:over} is that Bernstein polynomials produce a tighter over-approximation for ReLU functions as $L$ increases.


Figure~\ref{fig:ber_relu_order} emphasizes these conclusions pictorially where we show the Bernstein polynomials of $\sigma$ with orders $L = 1,2,8,16$. As shown in Figure~\ref{fig:ber_relu_order} (Left), the Bernstein polynomials $B_{\sigma, L}(x)$ for $L = \in \{1, 2, 8, 16\}$ over-approximate the ReLU activation function over the entire input range. Furthermore, the over-approximation gets tighter to the actual ReLU by increasing the approximation order $L$. We note that using $L = 1$, the resulting Bernstein polynomial produces the well-studied linear convexification of the ReLU function which is used in state-of-the-art algorithms for bounding neural networks including Symbolic Interval Arithmetic (SIA) \cite{wang2018efficient} and alpha-CROWN \cite{xu2020fast}. In other words, Bernstein polynomials can be seen as a generalization of these techniques. 


\subsection{Under-approximating ReLU functions using Bernstein polynomials}
In addition to the over-approximation of the ReLU function $\sigma$, it is essential to establish a Bernstein under-approximation of $\sigma$ which is captured by the following result.
%
%
\begin{proposition}\label{prop:under}
Given an interval $I_1\left(\underline{d}, \overline{d}\right) = \big[\underline{d}, \overline{d} \big]$, where $0 \in \big[\underline{d}, \overline{d} \big]$, then the following holds for all $x \in I_1$:
$$ \underline{B}_{\sigma, L}(x) = \overline{B}_{\sigma, L}(x) - \overline{B}_{\sigma, L}(0) \le \sigma(x).$$
\end{proposition}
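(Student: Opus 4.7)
The plan is to recognize that the stated inequality $\overline{B}_{\sigma,L}(x) - \overline{B}_{\sigma,L}(0) \le \sigma(x)$ is equivalent to saying that the over-approximation error $\overline{B}_{\sigma,L}(x) - \sigma(x)$ attains its maximum on $I_1$ at $x=0$: since $\sigma(0)=0$, moving $\overline{B}_{\sigma,L}(0)$ to the right side gives the equivalent claim $\overline{B}_{\sigma,L}(x) - \sigma(x) \le \overline{B}_{\sigma,L}(0) - \sigma(0)$. This is intuitive, because the nondifferentiable kink of ReLU sits at the origin, so that is precisely the hardest point for a smooth polynomial to fit. The proof then reduces to a short case split on the sign of $x$, with each case following from an appropriate derivative bound on $\overline{B}_{\sigma,L}$.

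The key technical step is to establish $0 \le \overline{B}'_{\sigma,L}(x) \le 1$ on all of $I_1$. I would start from the standard differentiation identity for Bernstein polynomials,
\[
\overline{B}'_{\sigma, L}(x) \;=\; \frac{L}{\overline{d} - \underline{d}} \sum_{k=0}^{L-1} \bigl(b^{\sigma}_{k+1,L} - b^{\sigma}_{k,L}\bigr)\, Ber_{k, L-1}(x),
\]
which expresses $\overline{B}'_{\sigma,L}$ itself as a Bernstein polynomial of order $L-1$ with coefficients $\tfrac{L}{\overline{d}-\underline{d}}(b^{\sigma}_{k+1,L} - b^{\sigma}_{k,L})$. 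Since the sampling grid has spacing $(\overline{d}-\underline{d})/L$, the monotonicity and $1$-Lipschitz property of $\sigma$ force $0 \le b^{\sigma}_{k+1,L} - b^{\sigma}_{k,L} \le (\overline{d}-\underline{d})/L$, so every one of these new coefficients lies in $[0,1]$. The range enclosing property of Definition 2, applied to this representation of the derivative, then immediately yields $0 \le \overline{B}'_{\sigma,L}(x) \le 1$ on $I_1$.

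With the monotonicity bound $\overline{B}'_{\sigma,L} \ge 0$ and the Lipschitz bound $\overline{B}'_{\sigma,L} \le 1$ in hand, I would finish with a case analysis. For $\underline{d} \le x \le 0$, we have $\sigma(x) = 0$ and monotonicity gives $\overline{B}_{\sigma,L}(x) \le \overline{B}_{\sigma,L}(0)$, so $\overline{B}_{\sigma,L}(x) - \overline{B}_{\sigma,L}(0) \le 0 = \sigma(x)$. For $0 \le x \le \overline{d}$, integrating $\overline{B}'_{\sigma,L} \le 1$ from $0$ to $x$ yields $\overline{B}_{\sigma,L}(x) - \overline{B}_{\sigma,L}(0) \le x = \sigma(x)$. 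I expect the only mildly delicate step to be the derivative representation itself; once it is in place, the range enclosing property together with the piecewise-linear structure of ReLU around the origin completes the argument.
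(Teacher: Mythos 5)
Your proof is correct, but it takes a genuinely different route from the paper. The paper argues through the approximation error $\epsilon_{\sigma,L}(x)=\overline{B}_{\sigma,L}(x)-\sigma(x)$: it asserts that the maximum of $\epsilon_{\sigma,L}$ over $[\underline{d},\overline{d}]$ is attained where $\sigma$ vanishes, i.e.\ on $[\underline{d},0]$, and then equals $\overline{B}_{\sigma,L}(0)$ by monotonicity of $\overline{B}_{\sigma,L}$ on $[\underline{d},0]$ (claimed to ``follow directly from the definition''); subtracting this maximal error from the over-approximation gives the result. You instead prove the quantitative derivative bounds $0\le \overline{B}'_{\sigma,L}\le 1$ via the Bernstein differentiation identity, the monotone and $1$-Lipschitz structure of the ReLU samples (so the difference coefficients lie in $[0,(\overline{d}-\underline{d})/L]$), and the convex-combination (range-enclosure) property of the Bernstein basis, and then close with a two-case argument: monotonicity on $[\underline{d},0]$ and integration of $\overline{B}'_{\sigma,L}\le 1$ on $[0,\overline{d}]$. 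This buys rigor precisely where the paper is thin: the paper's step that the maximum error occurs where $\sigma(x)=0$ is not immediate, since on $[0,\overline{d}]$ the error is $\overline{B}_{\sigma,L}(x)-x$ and could in principle exceed the values on the negative side unless one knows the slope bound $\overline{B}'_{\sigma,L}\le 1$ — which is exactly what you establish; likewise your $\overline{B}'_{\sigma,L}\ge 0$ substantiates the monotonicity the paper merely asserts. The only cosmetic caveat is that your appeal to Definition~2 is really an appeal to the underlying fact that the Bernstein basis polynomials are nonnegative and sum to one, so a polynomial written in Bernstein form is bounded by the min and max of its coefficients; stating it that way would make the step self-contained.
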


\begin{proof}
To prove the result, we define the approximation error $\epsilon_{\sigma,L}$ as:
$$\epsilon_{\sigma,L}(x) = \overline{B}_{\sigma,L}(x) - \sigma(x).$$
We bound the maximum estimation error satisfies as follows:
\begin{align}
\max\limits_{x \in [\underline{d}, \overline{d}]} \epsilon_{\sigma, L}(x) &=  
\max\limits_{x \in [\underline{d}, \overline{d}]} \big( \overline{B}_{\sigma, L}(x) - \sigma(x) \big) \\
&\stackrel{(a)}{=} \max\limits_{x \in [\underline{d}, 0]} \overline{B}_{\sigma, L}(x) \\
&\stackrel{(b)}{=} \overline{B}_{\sigma, L}(0) 
\label{eq:max_error}
\end{align}
where $(a)$ follows from the fact that $\sigma(x) = 0$ for $x \in [\underline{d}, 0]$ and $\sigma(x) \ge 0$ for $x \in [0, \overline{d}]$ and hence the maximum of the equation is attained whenever $\sigma(x) = 0$. Equation $(b)$ holds from the monotnicity of $\overline{B}_{\sigma, L}(x)$ when $x \in [\underline{d}, 0]$---the monotnicity follows directly from the definition of $\overline{B}_{\sigma, L}(x)$---and hence the maximum is attained when $x = 0$. It follows from the definition of $\epsilon_{\sigma,L}(x)$ that:
\begin{align*}
    \sigma(x) &= \overline{B}_{\sigma,L}(x) - \epsilon_{\sigma,L}(x)
    \ge \overline{B}_{\sigma,L}(x) - \max\limits_{x \in [\underline{d}, \overline{d}]} \epsilon_{\sigma, L}(x) \\
    &= \overline{B}_{\sigma,L}(x) - \overline{B}_{\sigma,L}(0) = \underline{B}_{\sigma,L}
\end{align*}
which concludes the proof.
\end{proof}

Proposition \ref{prop:under} shows that the maximum error between the Bernstein over-approximation polynomial $\overline{B}_{\sigma, L}$ and the ReLU activation function $\sigma$ is equal to the value of the Bernstein polynomial at $0$, i.e., $\overline{B}_{\sigma, L}(0)$. This result has a direct consequence on the efficiency of our tool. It is enough to propagate over-approximation of the ReLU function and one can get an under-approximation directly by shifting the over-approximation polynomial.

Figure \ref{fig:ber_relu_order} (Right) emphasizes this fact pictorially. As it is shown in the figure, the maximum error $\epsilon_{\sigma, L}(x) = \overline{B}_{\sigma,L} - \sigma(x)$ is reached at $x = 0$ and is equal to $\overline{B}_{\sigma, L}\left(0\right)$.

\begin{figure*}[ht]
    \centering
	\includegraphics[width=0.32\textwidth, height = 0.17\textheight]{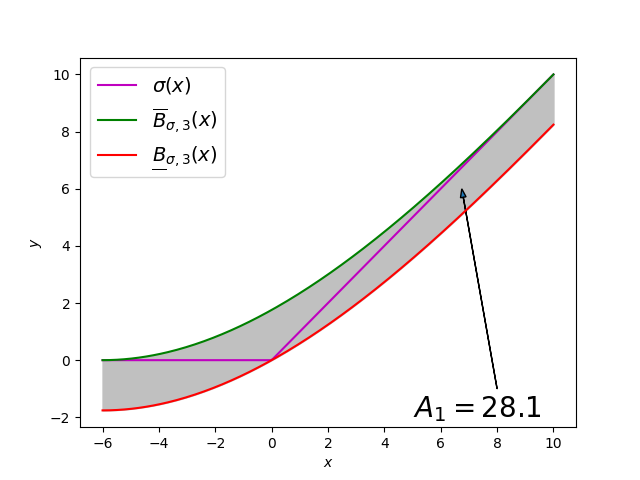} 
	\includegraphics[width=0.32\textwidth, height = 0.17\textheight]{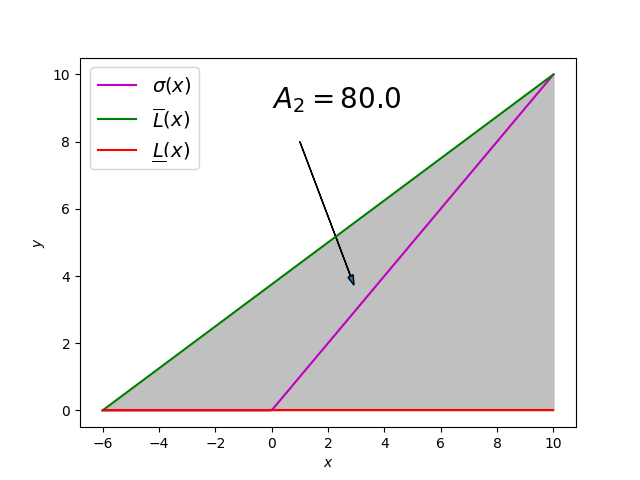} 
	\includegraphics[width=0.32\textwidth, height = 0.17\textheight]{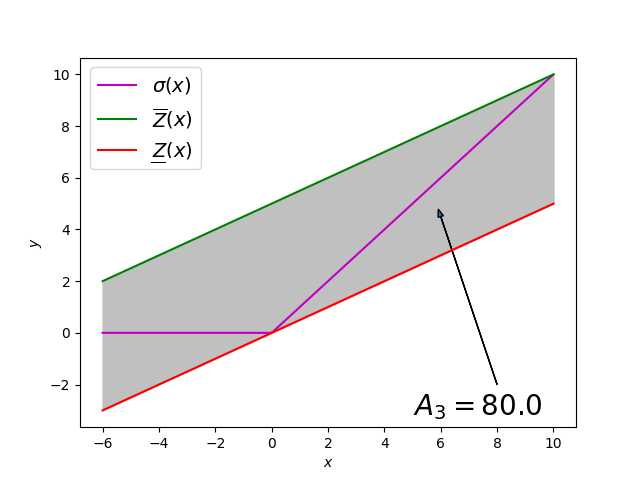} 
	\caption{Illustrations of the over-approximation sets (shaded in gray) of the ReLU activation functions in the interval $\big[-6, 10\big]$ using different approaches: Bernstein approach (Left), triangulation approach (Center), and zonotope approach (Right). Green (Red)-colored curves represent the over-approximation (under-approximation) curves for every approach, respectively. $A_i$, $i \in \{1, 2, 3\}$, represents the over-approximation set's area for every approach.
	}
	\label{fig:ber_trian_zonot}
\end{figure*}

\begin{table}[ht] 
\caption{The area of the over-approximation set of the ReLU activation functions in the interval $\big[-6, 10\big]$ using different Bernstein approach for different approximation order $L$.}
\label{bern_area_vs_order}
\begin{adjustbox}{center}
\begin{tabular}{|c|c|c|c|c|c|}
    \hline
     Approx. & Triangulation & Zonotope & 
     \multicolumn{3}{c|}{Bernstein poly}
     \\ \cline{4-6}
     Method & & & $L = 2$ & $L = 3$ & $L = 8$
     \\ \hline
     error & 80.0 & 80.0 & 37.5 & 28.1 &  16.9\\
     \hline
\end{tabular}
\end{adjustbox}
\end{table}

\subsection{Comparing Bernstein Approximation Against Widely Used Approximations}
The major advantage of using Bernstein polynomials is that they produce a tighter approximation for the response function of ReLU compared to the other state-of-the-art techniques. In particular, existing techniques focus on ``convexifying'' the response of the ReLU function through linear approximation/triangulation (Figure \ref{fig:ber_trian_zonot}-middle) or zonotopes (Figure \ref{fig:ber_trian_zonot}-right). Unlike these techniques, Bernstein polynomials lead to tighter non-convex approximations of the non-convex ReLU function. While it is direct to obtain a closed-form expression for the difference in the approximation error between Bernstein polynomials and triangulation/zonotope approximations, we, instead support our conclusions with the numerical example shown in Table~\ref{bern_area_vs_order} and highlighted in Figure \ref{fig:ber_trian_zonot}. In this example, we compute the approximation error (highlighted in gray) which captures the quality of the over and under-approximations. As captured by this example, it is direct to see that Bernstein polynomials lead to tighter approximation. Moreover, such approximation gets tighter as the approximation order $L$ increases.



\section{Encoding Basic Bernstein Polynomial Operations Using Multi-Dimensional Tensors}

While using Bernstein polynomials to approximate individual ReLU functions provides tighter bounds compared to other techniques, computing Bernstein polynomials via its definition in~\eqref{bernpol}-\eqref{bernpolcoeff2} is time-consuming. That is why state-of-the-art techniques have focused on linear (or convex) relaxations to obtain tractable computations. Nevertheless, in this section, we show that technological advances in Graphics Processing Units (GPUs) can be used to perform all the required operations to efficiently compute Bernstein polynomial approximations of individual neurons along with propagating these polynomials from one layer of the neural network to the next layer. Our main contribution of this section is to encode all necessary operations over Bernstein polynomials into additions and multiplication of multi-dimensional tensors that can be easily performed using GPUs.



\subsection{Multi-dimensional tensor representation of Bernstein polynomials}
We represent the Bernstein polynomial: 
$$B_{g, L}\left(x\right) = \sum\limits_{K \leq L}b^{g}_{K, L}Ber_{K, L}\left(x\right)$$
of function $g$ and order $L$ as a multi-dimensional tensor $\text{Ten}(B_{g, L})$ of $n$ dimensions, and of a shape of $L = \left(l_1 + 1, \cdots, l_n + 1\right)$, where the $K = \left(k_1, \cdots, k_n\right)$ component of $\text{Ten}(B_{g, L})$ is equal to the Bernstein coefficient $b^{g}_{K, L}$. The multi-dimensional tensor $\text{Ten}(B_{g, L})$ represent all the Bernstein coefficients $b^{g}_{K, L}$ of $g$, $\forall K \leq L$. 

\begin{example}
Consider the two-dimensional Bernstein polynomial:
$$B_{g, L}\left(x_1, x_2\right) = \sum\limits_{k_1 = 0}^{2} \sum\limits_{k_2 = 0}^{3} b^{g}_{\left(k_1, k_2\right), L}Ber_{\left(k_1, k_2\right), L}\left(x_1, x_2\right)$$
with orders $L = \left(2, 3\right)$. Its two-dimensional tensor representation is written as follows:
\begin{align}\label{ten_rep}
    \text{Ten}\left(B_{g, L}\right) = 
\begin{bmatrix}
b^{g}_{\left(0, 0\right), L} & b^{g}_{\left(0, 1\right), L} & b^{g}_{\left(0, 2\right), L} & b^{g}_{\left(0, 3\right), L}\\
b^{g}_{\left(1, 0\right), L} & b^{g}_{\left(1, 1\right), L} & b^{g}_{\left(1, 2\right), L} & b^{g}_{\left(1, 3\right), L}\\
b^{g}_{\left(2, 0\right), L} & b^{g}_{\left(2, 1\right), L} & b^{g}_{\left(2, 2\right), L} & b^{g}_{\left(2, 3\right), L}
\end{bmatrix}.
\end{align}
\end{example}

In a similar manner, we represent a multi-dimensional polynomial of order $L$ written in the power series form $p\left(x\right) = \sum\limits_{K \leq L}^{}a_Kx^K$ as a multi-dimensional tensor $\text{Ten}\left(p\right)$ of $n$ dimensions, and of a shape of $L = \left(l_1 + 1, \cdots, l_n + 1\right)$, where the $K = \left(k_1, \cdots, k_n\right)$ component of $\text{Ten}\left(p\right)$ is equal to the coefficient $a_{K}$.

\subsection{Multiplication of two multi-variate Bernstein polynomials}
Multiplying two polynomials represented in the power series form on GPUs has been widely studied in the literature. Unlike power series, multiplying two Bernstein polynomials need extra handling~\cite{scaled_bernstein}. In this subsection, we propose how to encode the multiplication of Bernstein polynomials using GPU implementations that were designed for power-series polynomials.

Given two multivariate polynomials written in a power series form, $p_1 = \sum\limits_{K \leq L_1}^{}a^1_Kx^K$ and $p_2 = \sum\limits_{K \leq L_2}^{}a^2_Kx^K$, and their tensor representation, $\text{Ten}\left(p_1\right)$ and $\text{Ten}\left(p_2\right)$, we use an efficient algorithm \cite{multi_var_poly_prod} that performs multivariate polynomial multiplications. We denote by $\textbf{Prod}\left(\text{Ten}\left(p_1\right), \text{Ten}\left(p_2\right)\right)$ the tensor resulting from such multiplication, i.e.:
%
\begin{align*}
    \text{Ten}\left(p_1p_2\right) = \textbf{Prod}\left(\text{Ten}\left(p_1\right), \text{Ten}\left(p_2\right)\right).
\end{align*}


Applying power-series-based algorithms to multiply two Bernstein polynomials produce incorrect results. Different algorithms were proposed for the case when the Bernstein polynomials are functions of one variable $x_1$~\cite{algs_berns_farouki} and two variables $x_1, x_2$~\cite{scaled_bernstein}. Below, we generalize the procedure in~\cite{scaled_bernstein} to account for Bernstein polynomials in $n$ variables.

\begin{proposition} \label{prop:multiply}
Given two multivariate Bernstein polynomials $B_{g_1, L_1}\left(x\right) = \sum\limits_{K \leq L_1}b^{g_1}_{K, L_1}Ber_{K, L_1}\left(x\right)$ and $B_{g_2, L_2}\left(x\right)= \sum\limits_{K \leq L_2}b^{g_2}_{K, L_2}Ber_{K, L}\left(x\right)$. The tensor representation of the Bernstein polynomial $B_{g_1, L_1}(x) B_{g_2,L_2}(x)$ can be computed as follows:
\begin{align}
    \text{Ten}\left(\tilde{B}_{g_1, L_1}\right) &= \text{Ten}\left(B_{g_1, L_1}\right) * C_{L_1}, \label{scaled_bern_tensor1}\\
    \text{Ten}\left(\tilde{B}_{g_2, L_2}\right) &= \text{Ten}\left(B_{g_2, L_2}\right) * C_{L_2}, \label{scaled_bern_tensor2}\\
  \text{Ten}\left(B_{g_1, L_1} B_{g_2,L_2}\right) &= \frac{1}{C_{L_1 + L_2}} * \textbf{Prod}\left(\text{Ten}\left(\tilde{B}_{g_1, L_1}\right), \text{Ten}\left(\tilde{B}_{g_2, L_2}\right)\right). \label{prod_original_bern}
\end{align}
where $C_{L}$ is the multi-dimensional binomial tensor where its $K$th component is equal to $L \choose K$, i.e, $\left(C_{L}\right)_{K} = {L \choose K}$. With some abuse of notation, we use $1/C_{L}$ to denote the multi-dimensional binomial tensor where its $K$th component is equal to $\frac{1}{{L \choose K}}$. 
\end{proposition}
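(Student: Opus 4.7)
The plan is to unfold the product of two Bernstein polynomials in closed form, identify each output coefficient as a convolution sum of the input coefficients (weighted by binomials), and then recognize that exact convolution is precisely what the power-series multiplication routine \textbf{Prod} computes. This will reduce the formula to a tensor identity once the Bernstein coefficients are pre-scaled and post-scaled by the appropriate binomial tensors.

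First I would establish the fundamental product-of-basis identity: for any multi-indices $K_1 \leq L_1$ and $K_2 \leq L_2$,
\begin{align*}
Ber_{K_1, L_1}(x)\, Ber_{K_2, L_2}(x) &= \binom{L_1}{K_1}\binom{L_2}{K_2}\frac{(x-\underline{d})^{K_1+K_2}(\overline{d}-x)^{(L_1+L_2)-(K_1+K_2)}}{(\overline{d}-\underline{d})^{L_1+L_2}} \\
&= \frac{\binom{L_1}{K_1}\binom{L_2}{K_2}}{\binom{L_1+L_2}{K_1+K_2}}\, Ber_{K_1+K_2,\, L_1+L_2}(x).
\end{align*}
This follows directly from the definition~\eqref{bernpolcoeff} by matching the powers of $(x-\underline{d})$, $(\overline{d}-x)$, and $(\overline{d}-\underline{d})$ and by using the multi-index factorization $\binom{L}{K}=\prod_i \binom{l_i}{k_i}$.

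Next I would expand $B_{g_1,L_1}(x)B_{g_2,L_2}(x)$, substitute the identity above, and re-index via $M = K_1 + K_2$, giving the Bernstein expansion in order $L_1+L_2$:
\begin{align*}
B_{g_1,L_1}(x)B_{g_2,L_2}(x) = \sum_{M \leq L_1+L_2}\frac{1}{\binom{L_1+L_2}{M}}\!\!\sum_{K_1+K_2 = M}\!\!\Bigl(b^{g_1}_{K_1,L_1}\binom{L_1}{K_1}\Bigr)\!\Bigl(b^{g_2}_{K_2,L_2}\binom{L_2}{K_2}\Bigr)Ber_{M, L_1+L_2}(x).
\end{align*}
So the target Bernstein coefficient at index $M$ is $\binom{L_1+L_2}{M}^{-1}$ times the convolution of the sequences $\bigl(b^{g_1}_{K,L_1}\binom{L_1}{K}\bigr)_K$ and $\bigl(b^{g_2}_{K,L_2}\binom{L_2}{K}\bigr)_K$.

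Finally, I would identify the two inner sequences as exactly the tensors $\text{Ten}(\tilde{B}_{g_1,L_1}) = \text{Ten}(B_{g_1,L_1}) * C_{L_1}$ and $\text{Ten}(\tilde{B}_{g_2,L_2}) = \text{Ten}(B_{g_2,L_2}) * C_{L_2}$ defined in \eqref{scaled_bern_tensor1}--\eqref{scaled_bern_tensor2}. Since \textbf{Prod} multiplies two multivariate polynomials represented in power-series form, its output tensor at index $M$ is precisely the convolution $\sum_{K_1+K_2=M}(\tilde{B}_{g_1,L_1})_{K_1}(\tilde{B}_{g_2,L_2})_{K_2}$. Dividing this tensor element-wise by $C_{L_1+L_2}$ (i.e., multiplying by $1/C_{L_1+L_2}$) then yields exactly the Bernstein coefficient formula derived above, establishing~\eqref{prod_original_bern}.

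The main obstacle is bookkeeping rather than any deep step: one must be careful that the reindexing $M = K_1+K_2$ respects the componentwise inequalities $K_1 \leq L_1$, $K_2 \leq L_2$ and $M \leq L_1 + L_2$, and that the binomial identity $\binom{L_1}{K_1}\binom{L_2}{K_2}/\binom{L_1+L_2}{K_1+K_2}$ factors correctly component-wise across all $n$ dimensions so that the element-wise tensor operations $*$ and $1/C_{L_1+L_2}$ line up with the scalar manipulations. Once this indexing is laid out cleanly, the three equations \eqref{scaled_bern_tensor1}--\eqref{prod_original_bern} follow by direct substitution.
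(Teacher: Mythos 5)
Your proposal is correct and takes essentially the approach the paper intends: the paper omits its proof, stating only that it generalizes the scaled-Bernstein argument of the cited reference, and your derivation---scaling the coefficients by $C_{L_1}$, $C_{L_2}$ via the basis-product identity, recognizing the resulting coefficient sum as the power-series convolution computed by \textbf{Prod}, and unscaling by $1/C_{L_1+L_2}$---is exactly that generalization to $n$ variables. No gaps; the componentwise factorization of the multi-index binomials and the automatic restriction of the convolution to valid index ranges are handled correctly.
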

The proof of Proposition~\ref{prop:multiply} generalizes the argument in~\cite{scaled_bernstein} to multi-dimensional inputs and is omitted for brevity. The Bernstein polynomials in~\eqref{scaled_bern_tensor1} and~\eqref{scaled_bern_tensor2} are called scaled Bernstein polynomials~\cite{scaled_bernstein} and enjoy the fact that their multiplication corresponds to the multiplication of power series polynomials. Hence we can use the power series \textbf{Prod} in~\eqref{prod_original_bern} followed by the element-wise multiplication with the $\frac{1}{C_{L_1 + L_2}}$ tensor to remove the effect of the scaling. Recall that we use $A * B$ to denote the element-wise multiplication between the tensors $A$ and $B$, which can also be carried over using GPUs efficiently which renders all the steps in equations~\eqref{scaled_bern_tensor1}-\eqref{prod_original_bern} to be efficiently implementable on GPUs. We refer to the equations~\eqref{scaled_bern_tensor1}-\eqref{prod_original_bern} as $\textbf{Prod\_Bern}(B_{g_1, L_1}, B_{g_2, L_2})$. 

Using \textbf{Prod\_Bern}, one can compute the tensor corresponding to raising the function $g$ to power $i$, where $i \in \mathbb{N}$ is an integer power, denoted by $\text{Ten}(B_{g^i, L})$ by applying the \textbf{Prod\_Bern} procedure $i$ times. We refer to this procedure as $\textbf{Pow\_Bern}(\text{Ten}(B_{g, L}), i)$.

\subsection{Addition between two Bernstein polynomials}
The authors in~\cite{algs_berns_farouki} studied how to add two Bernstein polynomials. However, their study is restricted to one-dimensional polynomials which are defined over the unity interval $I_1\left(x\right) = [0, 1]$. We extend the argument to the general case with $n$ inputs and any interval $I_n(\underline{d},\overline{d})$ using the following result.

\begin{proposition} \label{prop:addition}
Given two Bernstein polynomials $B_{g_1, L_1}\left(x\right)$ and $B_{g_2, L_2}(x)$ with two different orders $L_1 = \left(l^{1}_1, \cdots, l^{1}_n\right)$ and $L_2 = \left(l^{2}_1, \cdots, l^{2}_n\right)$. Define $L_{sum} = \max(L_1, L_2)$, where the $\max$ operator is applied element-wise.
The tensor representation of $B_{g_1 + g_2, L_{sum}}$ can be computed as:
\begin{align}
    L_{sum} &= (\max(l^{1}_1, l^{2}_1), \ldots, \max(l^{1}_n,l^{2}_n)) \label{eq:L_sum}\\
    \text{Ten}\left(B_{g_1, L_{sum}}\right) &= \textbf{Prod\_Bern}\left(\text{Ten}\left(B_{g_1, L_1}\right), 1_{L_{sum} - L_1 + 1} \right) \label{eq:degree_elevation_1}\\
    \text{Ten}\left(B_{g_2, L_{sum}}\right) &= \textbf{Prod\_Bern} \left(\text{Ten}\left(B_{g_2, L_2}\right), 1_{L_{sum} - L_2 + 1} \right) \label{eq:degree_elevation_2}\\
    \text{Ten}\left(B_{g_1 + g_2, L_{sum}}\right) &= \text{Ten}\left(B_{g_1, L_{sum}}\right) + \text{Ten}\left(B_{g_2, L_{sum}}\right) \label{eq:tensor_sum}
\end{align}
where $1_{L_e - L + 1}$ is a multi-dimensional tensor of a shape $L_e - L + 1$ that contains just ones.
\end{proposition}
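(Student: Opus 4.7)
The plan is to reduce addition of two Bernstein polynomials of possibly different orders to the degree-matched case, where coefficient-wise addition of the tensors is immediate. Specifically, I would first elevate each of $B_{g_1,L_1}$ and $B_{g_2,L_2}$ to the common order $L_{sum}=\max(L_1,L_2)$ (taken coordinate-wise), and then exploit linearity of expansion in a fixed Bernstein basis.

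First I would establish that the Bernstein basis enjoys a partition-of-unity property at every order $M$: $\sum_{K \le M} Ber_{K,M}(x) = 1$ for all $x \in I_n(\underline{d},\overline{d})$. This follows from the factorized form of $Ber_{K,M}$ in~\eqref{bernpolcoeff} by applying the one-dimensional binomial identity $\sum_{k_i=0}^{m_i}\binom{m_i}{k_i}(x_i-\underline{d}_i)^{k_i}(\overline{d}_i-x_i)^{m_i-k_i} = (\overline{d}_i-\underline{d}_i)^{m_i}$ in each coordinate and multiplying across $i=1,\ldots,n$. Consequently, the constant function $1$ is representable as a Bernstein polynomial of \emph{any} order $M$ whose coefficient tensor is precisely $1_{M+1}$, the all-ones tensor of shape $M+1$. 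Taking $M = L_{sum}-L_1$ and invoking Proposition~\ref{prop:multiply} on the pair $\bigl(\text{Ten}(B_{g_1,L_1}),\, 1_{L_{sum}-L_1+1}\bigr)$ produces the tensor of the product polynomial $B_{g_1,L_1}(x)\cdot 1 = B_{g_1,L_1}(x)$, but now encoded in the Bernstein basis of order $L_1+(L_{sum}-L_1)=L_{sum}$. This is exactly identity~\eqref{eq:degree_elevation_1}, and the symmetric argument gives~\eqref{eq:degree_elevation_2}.

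Once both polynomials live in the same basis $\{Ber_{K,L_{sum}}\}_{K\le L_{sum}}$, I would close the argument by linearity: writing $B_{g_i,L_{sum}}(x) = \sum_{K\le L_{sum}} c^{(i)}_K\, Ber_{K,L_{sum}}(x)$ for $i=1,2$, we immediately obtain $B_{g_1,L_{sum}}(x)+B_{g_2,L_{sum}}(x) = \sum_{K\le L_{sum}} (c^{(1)}_K + c^{(2)}_K)\, Ber_{K,L_{sum}}(x)$, whose tensor encoding is precisely the element-wise sum~\eqref{eq:tensor_sum}. Linear independence of $\{Ber_{K,L_{sum}}\}$ on $I_n(\underline{d},\overline{d})$ guarantees that this coefficient-level identification is unambiguous.

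The main technical obstacle is the degree-elevation step, i.e.\ justifying that $\textbf{Prod\_Bern}$ against a ones-tensor genuinely re-expresses the same polynomial at the higher order. The two ingredients needed are the partition-of-unity identity above and the already-proven correctness of Proposition~\ref{prop:multiply}; neither is deep, but the calculation in (i) is what makes the ``multiplication by $1_{L_{sum}-L+1}$'' trick work and must be stated explicitly. Everything else---the element-wise max in~\eqref{eq:L_sum} and the tensor addition in~\eqref{eq:tensor_sum}---is then a direct consequence.
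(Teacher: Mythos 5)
Your argument is correct and is essentially the route the paper intends: the paper omits the proof, citing it as a generalization of the degree-elevation argument in the one-dimensional case, and your partition-of-unity observation (the constant $1$ has all-ones coefficient tensor at any order, so \textbf{Prod\_Bern} against $1_{L_{sum}-L+1}$ is exact degree elevation via Proposition~\ref{prop:multiply}) followed by coefficient-wise addition in the common basis is precisely that generalization, spelled out correctly.
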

The proof of Proposition~\ref{prop:addition} generalizes the argument in~\cite{algs_berns_farouki} and is omitted for brevity. The operation in~\eqref{eq:degree_elevation_1} and~\eqref{eq:degree_elevation_2} is referred to as \emph{degree elevation} in which we change the dimensions of the tensors ... Once both tensors are of the same dimension, we can add them element-wise. We denote by \textbf{Sum\_Bern} the procedure defined by~\eqref{eq:L_sum}-\eqref{eq:tensor_sum}. Again, we note that all the operations in the \textbf{Sum\_Bern} entail tensor element-wise multiplication and addition

\section{BERN-NN algorithm}
In this section, we provide the details of our tool, named BERN-NN. BERN-NN uses the tensor encoding discussed in Section 4 to propagate Bernstein polynomials that over- and under-approximate the different neurons in the network until over- and under-approximation polynomials for the final output of the network are computed.

\subsection{Propagating bounds through single neuron}

We first discuss how to propagate over- and under-approximations through neurons. Recall our notation that we use $\hat{z}^{(i)}_j$ and $z^{(i)}_j$ to denote the input and output of the $j$-th neuron in the $i$-th layer. For ease of notation, we drop the $i$ and $j$ from the notation in this subsection.

Assume that we already computed the over- and under-approximations for the input of one of the hidden neurons, denoted by $\overline{B}_{\hat{z},L_{\hat{z}}}(x)$ and $\underline{B}_{\hat{z},L_{\hat{z}}}(x)$, respectively. The objective is to compute the over- and under-approximations for the output of such a neuron, denoted by $\overline{B}_{{z},L_z}(x)$ and $\underline{B}_{{z},L_z}(x)$, respectively. We proceed as follows.


\noindent \textbf{Step 1: Compute input bounds for the neuron.} Recall that the Bernstein coefficients depend on the input bounds of the function it aims to approximate. Since our aim is to approximate the scalar ReLU function of a neuron, we start by computing the bounds on the input to that neuron as follows:
\begin{align} \label{eq:find_lo_hi}
    lo &= \min_{x \in I_n(\underline{d},\overline{d}) } \underline{B}_{\hat{z},L_{\hat{z}}}(x), \qquad
    hi = \max_{x \in I_n(\underline{d},\overline{d}) } \overline{B}_{\hat{z},L_{\hat{z}}}(x) 
\end{align} 
Thanks to the enclosure property~\eqref{range_enclosing_property}, we can solve the optimization problems~\eqref{eq:find_lo_hi} by finding the minimum and the maximum coefficients of $\underline{B}_{\hat{z},L_{\hat{z}}}$ and $\overline{B}_{\hat{z},L_{\hat{z}}}$.

\noindent \textbf{Step 2: Compute the polynomials $\overline{B}_{\sigma,L}$ and $\underline{B}_{\sigma,L}$ that approximate the ReLU function.}
Given a user-defined approximation order $L$, the next step is to compute the Bernstein polynomials that over- and under-approximate  the ReLU activation function $\sigma$ denoted by $\overline{B}_{\sigma,L}$ and $\underline{B}_{\sigma,L}$. These polynomials can be computed using the knowledge of $lo$ and $hi$ along with the definition of the Bernstein polynomial in~\eqref{bernpolcoeff2}. To facilitate the computations of the next step, we need to convert these polynomials into the corresponding power series form. This can be done by following the procedure in~\cite{shash} to obtain:
\begin{align}
    p_{\overline{B}_{\sigma,L}}(x) &= \sum_{K \le L} a^{\overline{B}_{\sigma,L}}_K x^K, \qquad 
    p_{\underline{B}_{\sigma,L}}(x) = \sum_{K \le L} a^{\underline{B}_{\sigma,L}}_K x^K
\end{align}

\noindent \textbf{Step 3: Propagate the bounds through the decomposition of polynomials.}
First, note that the following holds due to the monotonicity of the ReLU function $\sigma$ and the fact that $z = \sigma(\hat{z})$:
\begin{align}
    \underline{B}_{\hat{z},L_{\hat{z}}}(x) \le \hat{z}(x) &\le \overline{B}_{\hat{z},L_{\hat{z}}}(x) \Rightarrow \\
    &\underbrace{\sigma\left(\underline{B}_{\hat{z},L_{\hat{z}}}(x)\right)}_{\underline{B}_{z,L_z}(x)} \le \underbrace{\sigma\Big(\hat{z}(x)\Big)}_{z(x)} \le \underbrace{\sigma\left(\overline{B}_{\hat{z},L_{\hat{z}}}(x)\right)}_{\overline{B}_{z,L_z}(x)}
\end{align}
In other words, the post-bounds of the neuron, denoted by $\overline{B}_{{z},L_z}(x)$ and $\underline{B}_{{z},L_z}(x)$ can be computed by composing the function $\sigma$ with the under- and over-approximations of the neuron input $\underline{B}_{\hat{z},L_{\hat{z}}}(x)$ and $\overline{B}_{\hat{z},L_{\hat{z}}}(x)$. Indeed such composition is hard to compute due to the nonlinearity in $\sigma$. Instead, we perform such composition with the over- and under-approximations of $\sigma$, $p_{\overline{B}_{\sigma,L}}$ and $p_{\underline{B}_{\sigma,L}}$, computed in Step 2, as:
\begin{align}
    \underline{B}_{{z},L_z}(x) &= 
    \sum_{K \le L} a^{\underline{B}_{\sigma,L}}_K \left(\underline{B}_{\hat{z},L_{\hat{z}}}(x)\right)^K \label{eq:post_bounds_1}\\
    \overline{B}_{{z},L_z}(x) &= 
    \sum_{K \le L} a^{\overline{B}_{\sigma,L}}_K \left(\overline{B}_{\hat{z},L_{\hat{z}}}(x) \right)^K
    \label{eq:post_bounds_2}
\end{align}
Given the tensor representation $Ten(\underline{B}_{\hat{z},L_{\hat{z}}})$ and $Ten(\overline{B}_{\hat{z},L_{\hat{z}}})$, we can use the \textbf{Pow\_Bern} and \textbf{Sum\_Bern} procedures to perform the computations in~\eqref{eq:post_bounds_1} and~\eqref{eq:post_bounds_2} to calculate  $Ten(\underline{B}_{z,L_z})$ and $Ten(\overline{B}_{z,L_z})$ with $L_z = L_{\hat{z}} * L$.

\subsection{Propagating the bounds through one layer}

Next, we discuss how to propagate the under- and over-approximation polynomials of the outputs of the $i-1$ layer denoted by $\underline{B}_{z^{(i-1)}_j,\; L_z}, \overline{B}_{z^{(i-1)}_j, \; L_z}, j \in \{1,\ldots,h_{i-1}\}$ to compute under- and over-approximation of the inputs of the neurons in the $i$th layer $\underline{B}_{\hat{z}^{(i)}_m, \; L_{\hat{z}}}, \overline{B}_{z^{(i)}_m, \; L_{\hat{z}}}, m \in \{1,\ldots,h_{i}\}$ of the neural network. Such bound propagation entails composing the under- and over-approximation polynomials $\underline{B}_{z^{(i-1)}_j, \; L_z}, \overline{B}_{z^{(i-1)}_j, \; L_z}$ with the weights of the $i$th layer of the neural network $W^{(i)}, b^{(i)}$.
To that end, we define the set of positive and negative weights as:
$$W^{(i)}_{+} = \max\left(W^{\left(i\right)}, 0_{i \times (i - 1)}\right) \qquad W^{(i)}_{-} = \min\left(W^{\left(i\right)}, 0_{i \times (i - 1)}\right).$$
Similarly, for the outputs of the $i-1$ layer of the network, we define the vector of over-approximation polynomials and vector of the under-approximation polynomials as:
\begin{align*}
\overline{B}_{z^{(i-1)}, \; L_z} &= \left[ \overline{B}_{z^{(i-1)}_1, L_z} \ldots, \overline{B}_{z^{(i-1)}_{h_{i-1}},L_z}\right]^T, \\
\underline{B}_{z^{(i-1)}, \; L_z} &= \left[ \underline{B}_{z^{(i-1)}_1, L_z} \ldots, \underline{B}_{z^{(i-1)}_{h_{i-1}},L_z}\right]^T,
\end{align*}
and for the inputs of the $i$the layer as:
\begin{align*}
\overline{B}_{\hat{z}^{(i)}, \; L_{\hat{z}}} &= \left[ \overline{B}_{\hat{z}^{(i)}_1, L_{\hat{z}}} \ldots, \overline{B}_{\hat{z}^{(i)}_{h_{i}}, L_{\hat{z}}}\right]^T \\
\underline{B}_{\hat{z}^{(i)}, \; L_{\hat{z}}} &= \left[ \underline{B}_{\hat{z}^{(i)}_1, L_{\hat{z}}} \ldots, \underline{B}_{\hat{z}^{(i)}_{h_{i}}, L_{\hat{z}}}\right]^T
\end{align*}
Hence, the over- and under-approximations of the inputs of the $i$th layer can be efficiently computed as:
\begin{align}
    Ten\left( \overline{B}_{\hat{z}^{(i)}, L_{\hat{z}}} \right) &\!=\! Ten \left( \overline{B}_{z^{(i-1)}, L_z} \right) \!*\! W^{(i)}_{+} \!+\! Ten \left( \underline{B}_{z^{(i-1)}, L_z} \right) \!*\! W^{(i)}_{-} + b^{(i)} \label{eq:upper_bounds_layer}\\
    Ten\left( \underline{B}_{\hat{z}^{(i)}, L_{\hat{z}}} \right) &\!=\! Ten \left( \underline{B}_{z^{(i-1)}, L_z} \right) \!*\! W^{(i)}_{+} \!+\! Ten \left( \overline{B}_{z^{(i-1)}, L_z} \right) \!*\! W^{(i)}_{-} + b^{(i)} \label{eq:lower_bounds_layer}
\end{align}

\subsection{Mechanism of BERN-NN Polynomial Interval Arithmetic}

\begin{figure*}[!t]
\centering
	\includegraphics[scale=0.35]{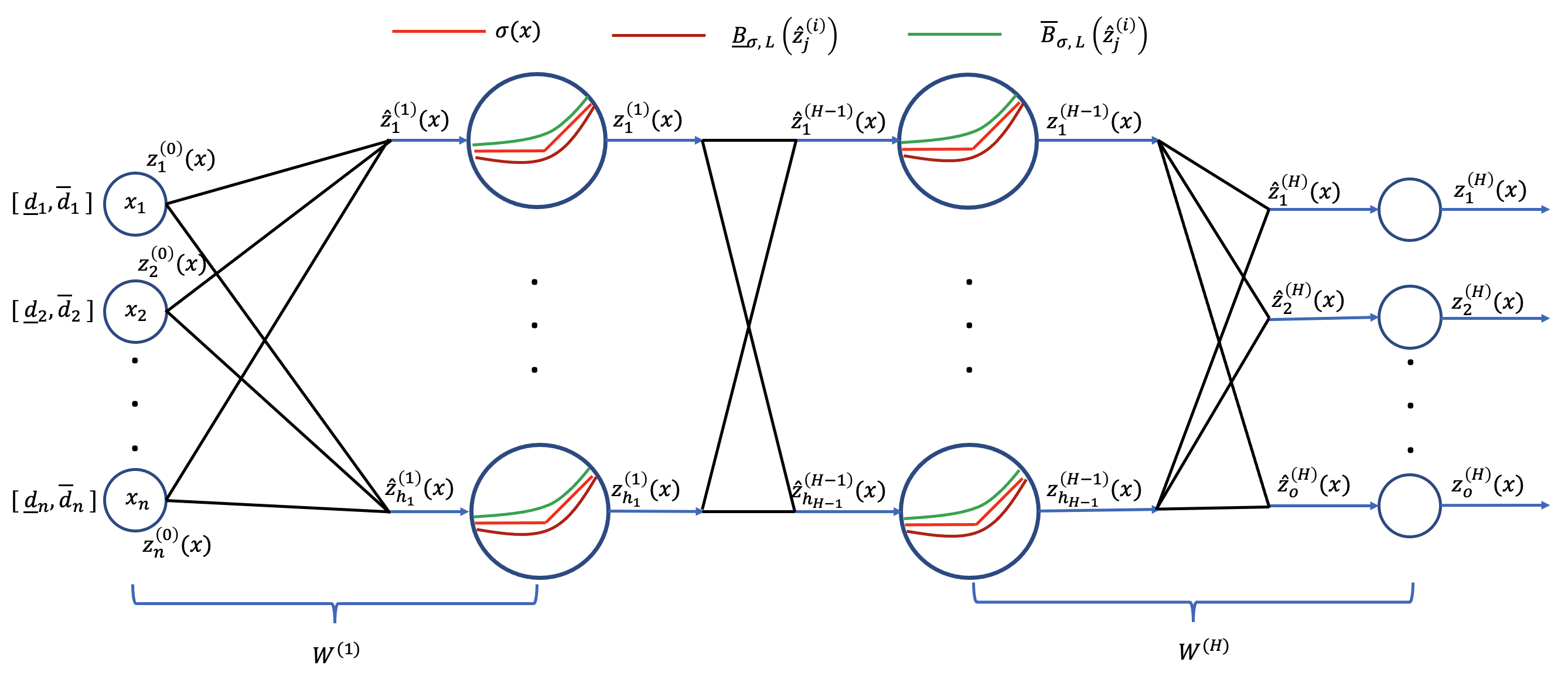} 
	\caption{Mechanism of BERN-NN Polynomial Interval Arithmetic.}
	\label{fig:BERN-NN}
\end{figure*}

We finally describe the proposed BERN-NN Polynomial Interval Arithmetic algorithm, depicted in Figure~\ref{fig:BERN-NN}. 
For a neural network with $n$ inputs $x_1, \ldots, x_n$, we initialize an over- and under-approximation Bernstein polynomials for each of the inputs, i.e.,:
$$ \overline{B}_{z^{(0)}_i, 1} = \underline{B}_{z^{(0)}_i, 1} = \overline{B}_{z^{(0)}_i,1} \qquad i \in \{1, \ldots, n\}.$$
Note that in the equation above, we used $z^{(0)}_i$ as a replacement of $x_i$ to unify the notation with the remainder of the operations (see Figure~\ref{fig:BERN-NN}). To compute the Bernstein polynomials $\overline{B}_{z^{(0)}_i, 1}$ and $\underline{B}_{z^{(0)}_i, 1}$, we recall that the coefficients of such polynomials depend on the input domain. Hence, given a hypercube $I_n(\underline{d}, \overline{d})$ that bounds the input $x$ of the neural network, we compute the tensor representation of these polynomials as:
\begin{align}
    Ten\left( \overline{B}_{z^{(0)}_1, 1} \right) &= Ten\left( \underline{B}_{z^{(0)}_1, 1} \right) = 
    \begin{bmatrix}
    \underline{d}_1 \\
    \overline{d}_1
    \end{bmatrix}
    \otimes 
    \begin{bmatrix}
    1 \\
    1
    \end{bmatrix}
    \otimes
    \ldots
    \otimes
    \begin{bmatrix}
    1 \\
    1
    \end{bmatrix}\\
    Ten\left( \overline{B}_{z^{(0)}_2, 1} \right) &= Ten\left( \underline{B}_{z^{(0)}_2, 1} \right) = 
    \begin{bmatrix}
    1 \\
    1
    \end{bmatrix}
    \otimes
    \begin{bmatrix}
    \underline{d}_2 \\
    \overline{d}_2
    \end{bmatrix}
    \otimes
    \ldots
    \otimes
    \begin{bmatrix}
    1 \\
    1
    \end{bmatrix}\\
    &\vdots\\
    Ten\left( \overline{B}_{z^{(0)}_n, 1} \right) &= Ten\left( \underline{B}_{z^{(0)}_n, 1} \right) = 
    \begin{bmatrix}
    1 \\
    1
    \end{bmatrix}
    \otimes
    \begin{bmatrix}
    1 \\
    1
    \end{bmatrix}
    \otimes
    \ldots
    \otimes
    \begin{bmatrix}
    \underline{d}_n \\
    \overline{d}_n
    \end{bmatrix}
\end{align}





Next, we propagate these over- and under-approximation polynomials to the inputs of the first layer in the neural network using~\eqref{eq:upper_bounds_layer} and~\eqref{eq:lower_bounds_layer}. Given a user-defined approximation order $L$, we propagate the polynomial approximations through the ReLU function using~\eqref{eq:post_bounds_1} and~\eqref{eq:post_bounds_2} for each of the neurons in layer 1. The produced over- and under-approximations of the outputs of all neurons are aggregated together in one tensor which is then propagated to the next layer. This process continues until we compute the over- and under-approximation polynomials of the outputs of the neural network, denoted by $\overline{B}_{z^{(H)}_j, L^{H - 1}}(x), \underline{B}_{z^{(H)}_j, L^{H - 1}}(x)$ for $j = 1, \ldots, o$. These polynomials are used as the solution of Problem 1.

It is important to note that the final Bernstein polynomials $\overline{B}_{z^{(H)}_j, L^{H - 1}}(x), \underline{B}_{z^{(H)}_j, L^{H - 1}}(x)$ have orders of $L^{H - 1}$ where $L$ is the user-defined order of approximation of the ReLU function and $H$ is the number of layers. This polynomial order increases exponentially with the number of hidden layers. Similarly, the shape of their multi-dimensional tensor representations is equal to $L^{H - 1} + 1$ which increases exponentially with the number of hidden layers. To alleviate this problem, we introduce a parameter called $Lin$. Based on this parameter, we drop the orders of the post-bound over- and under-approximation polynomials to $[1, \cdots, 1]$. In other words, we linearize the approximation polynomials every $Lin$ hidden layers. We use the algorithm in \cite{linearize_bernstein} to perform such linearization of the Bernstein polynomial. Luckily, this algorithm, like all the other operations in our BERN-NN involves tensor multiplications and additions and hence can be parallelized over GPUs efficiently.

Finally, note that one can always obtain absolute bounds on the inputs or outputs of any of the neurons (including the outputs of the neural network), thanks to the enclosure property of Bernstein polynomials~\eqref{range_enclosing_property}. Such absolute bounds are useful for reachability analysis and model checkers.

\subsection{GPU Implementation Details}
To get the performance increase of GPUs without the complications of low-level languages, we implemented this tool in PyTorch. As mentioned above, we represent n-dimensional Bernstein polynomials as dense n-dimensional tensors. The tool becomes memory bound very quickly as the number of input nodes increases, making the number of dimensions in the tensors larger. In order to combat this, we use as many in-place operations as possible to avoid repeatedly allocating large chunks of memory during computation. Similarly, the multinomial coefficients used for degree elevation are used multiple times throughout the tool, and we cache each the first time they are generated to avoid spending time re-doing calculations and allocating additional memory.

We parallelized the tool on a node level: at each layer, the outputs of the last layer are passed to each node, which then can run independently of each other on separate GPUs. However, because the tensors become large very quickly, the gains in computation time only offset the overhead of copying tensors between GPUs when the neural network is particularly large. We collect and stack the outputs of all the nodes in one tensor and pass it to the next layer. When the polynomials are being composed with the ReLU approximation, each term is elevated to the highest degree expected of a composition between these two polynomials. This both ensures that the outputs of all the neurons can be stacked, as they are all the same shape and size, and also allows the multiplication of the stacked outputs of the last layer by the incoming weights to be a simple broadcasting multiplication, which is then easily parallelizable on a GPU.

We achieved additional performance gains by rewriting for-loops as element-wise tensor operations and by batching linear algebra operations like matrix multiplications and calculating the least-square solutions of matrices, both of which allow operations to be easily parallelized on GPUs and reduce the amount of time spent allocating many small patches of memory, instead doing a single large allocation.

\section{Numerical Results}
In this section, we perform a series of numerical experiments to evaluate the scalability and effectiveness of  our tool. First, we conduct an ablation study to check the effect of varying different parameters (e.g., neural network width, neural network depth, ReLU approximation order) on the performance of our tool. We utilize two metrics:
\begin{itemize}
    \item \textbf{Execution time}: which measures the time (in seconds) needed to compute the final Bernstein polynomials. Indeed, smaller values indicate better performance.
    \item \textbf{Relative volume of the output set}: this metric measures the ``tightness'' of the produced over- and under-approximation polynomials. Without loss of generality, we focus on neural networks with one output $z^{(H)}$ and we compute this metric as:
    \begin{align}
        \text{Vol\_relative} &= \frac{\text{Vol\_Output}}{\text{Vol\_Input}} \\
         \text{Vol\_Input} &=   \prod_{i= 1}^{n} \bigg( \overline{d}_i - \underline{d}_i\bigg) \\
         \text{Vol\_Output} &=  \idotsint_{I_n} \bigg(\overline{B}_{z^{(H)}}(x) -  \underline{B}_{z^{(H)}}(x)\bigg) \,dx_1 \dots dx_n    
    \end{align}
    Indeed, smaller values of this metric indicate tighter approximations of the output set.
\end{itemize}
 
 After the ablation study, we compare our tool with a set of state-of-the-art bound computation tools---including the winner of the last 2022 Verification of Neural Network (VNN) competition~\cite{bak2021second}---to study the relative performance. 

\textbf{Setup}: We implemented our tool in Python3.9 using PyTorch for all tensor arithmetic. We run all our experiments using a single GeForce RTX 2080 Ti GPU and two 24-core Intel(R) Xeon(R). We like to note that the throughput of the tool can be increased by utilizing multiple GPU to process different neurons in parallel in a batch-processing fashion. However, in this section, we focus on using only one GPU and we leave the generalization of our algorithm to utilize multiple GPUs for future work.  

\subsection{Ablation study}

\subsubsection{The effect of varying the ReLU's order of approximation:}
We study the effect of varying the ReLU's order of approximation $L$ for a fixed NN architecture on the execution time and the output's relative volume space of our tool. In Figure \ref{fig:varying_order_time},  we report the statistical results for 50 random networks of a fixed architecture. Figure \ref{fig:varying_order_time} (top) shows that increasing the approximation order increases the execution time. On the other hand, Figure \ref{fig:varying_order_time} (bottom) shows that the relative volume of the output set significantly decreases with increasing the order of approximation. The results of both figures highlight the trade-off between the tightness of the output bounds and the execution time as a function of the ReLU approximation order $L$.

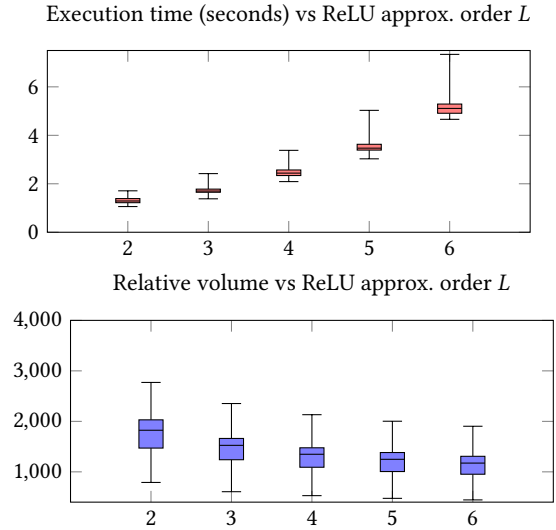
\begin{figure}
    \centering
    \begin{tikzpicture}
        \begin{axis} [
            title= Execution time (seconds) vs ReLU approx. order $L$,
            height=4.0cm, width=8.0cm,
            xmin=0, xmax=6, 
            xtick={1,2,3,4,5},
            xticklabels={2, 3, 4, 5, 6},
            legend style={at={(0.5,-0.2)}, anchor=north, legend columns=-1},
            ymin=0,ymax=7.5,
            boxplot/draw direction=y,
            /pgfplots/boxplot/box extend=0.3,
            boxplot/every box/.style={fill=red!50},
        ]
        \addplot+ [
            color = black,
            boxplot prepared={
              lower quartile=1.22,
               median = 1.29,
              upper quartile=1.39,
              lower whisker=1.06,
              upper whisker=1.71,
            },
        ] coordinates {};
        \addplot+ [
            color = black,
            boxplot prepared={
              lower quartile=1.65,
               median = 1.70,
              upper quartile=1.78,
              lower whisker=1.38,
              upper whisker=2.42,
            },
        ] coordinates {};
        \addplot+ [
            color = black,
            boxplot prepared={
              lower quartile=2.34,
               median = 2.44,
              upper quartile=2.57,
              lower whisker=2.09,
              upper whisker=3.38,
            },
        ] coordinates {};
        \addplot+ [
            color = black,
            boxplot prepared={
              lower quartile=3.39,
               median = 3.47,
              upper quartile=3.63,
              lower whisker=3.03,
              upper whisker=5.03,
            },
        ] coordinates {};
        \addplot+ [
            color = black,
            boxplot prepared={
              lower quartile=4.91,
               median = 5.11,
              upper quartile=5.29,
              lower whisker=4.66,
              upper whisker=7.34,
            },
        ] coordinates {};
        \end{axis}
    \end{tikzpicture}
    \begin{tikzpicture}
        \begin{axis} [
            title=Relative volume vs ReLU approx. order $L$,
            height=4.0cm, width=8.0cm,
            xmin=0, xmax=6, 
            xtick={1,2,3,4,5},
            xticklabels={2, 3, 4, 5, 6},
            legend style={at={(0.5,-0.2)}, anchor=north, legend columns=-1},
            ymin=400,ymax=4000,
            boxplot/draw direction=y,
            /pgfplots/boxplot/box extend=0.3,
            boxplot/every box/.style={fill=blue!50},
        ]
        \addplot+ [
            color = black,
            boxplot prepared={
              lower quartile=1470.7,
               median = 1823.96,
              upper quartile=2030.10,
              lower whisker=790.2,
              upper whisker=2770.94,
            },
        ] coordinates {};
        \addplot+ [
            color = black,
            boxplot prepared={
              lower quartile=1239.97,
               median = 1524.99,
              upper quartile=1661.76,
              lower whisker=606.80,
              upper whisker=2353.04,
            },
        ] coordinates {};
            \addplot+ [
            color = black,
            boxplot prepared={
              lower quartile=1091.03,
               median = 1348.26,
              upper quartile=1476.51,
              lower whisker=528.67,
              upper whisker=2132.04,
            },
        ] coordinates {};
        \addplot+ [
            color = black,
            boxplot prepared={
              lower quartile=1005.86,
               median = 1248.37,
              upper quartile=1382.73,
              lower whisker=475.18,
              upper whisker=2002.61,
            },
        ] coordinates {};
        \addplot+ [
            color = black,
            boxplot prepared={
              lower quartile=952.83,
               median = 1173.46,
              upper quartile=1307.95,
              lower whisker=444.40,
              upper whisker=1903.01,
            },
        ] coordinates {};
        \end{axis}
    \end{tikzpicture}
    \caption{Effect of varying the ReLU's order of approximation $L$ for a NN architecture $[2, 20, 20, 1]$ on the execution time of our tool (top) and the relative volume of the output set (bottom). We set $n = 2$, $I_n = [-1, 1]^n$, and $Lin = 0$. The weights and biases are generated randomly following uniform distribution between $-5$ and $5$. The reported results are generated for $50$ experiments.}
    \label{fig:varying_order_time}
\end{figure}

\subsubsection{The effect of varying the input's dimension:}
We study the effect of varying the input's dimension $n$, for a fixed NN architecture on the execution time of our tool. Figure \ref{fig:varying_dims_time} shows that the execution time for computing the output set grows linearly for smaller values of $n$ but seems to grow more rapidly after $n=7$. This suggests that the proposed tool can be used efficiently for many control applications.

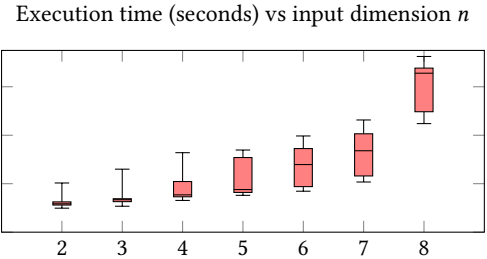
\begin{figure}
    \centering
    \begin{tikzpicture}
        \begin{axis} [
            title=Execution time (seconds) vs input dimension $n$,
            height=4.0cm, width=8.0cm,
            xmin=0, xmax=8, 
            xtick={1,2,3,4,5,6,7},
            xticklabels={2, 3, 4, 5, 6, 7, 8},
            legend style={at={(0.5,-0.2)}, anchor=north, legend columns=-1},
            ymin=0,ymax=7.5,
            boxplot/draw direction=y,
            /pgfplots/boxplot/box extend=0.3,
            boxplot/every box/.style={fill=red!50},
            cycle list={{red},{blue}},
        ]
        \addplot+ [
            color = black,
            boxplot prepared={
              lower quartile=1.12,
               median = 1.17,
              upper quartile=1.25,
              lower whisker=0.99,
              upper whisker=2.03,
            },
        ] coordinates {};
        \addplot+ [
            color = black,
            boxplot prepared={
              lower quartile=1.26,
               median = 1.34,
              upper quartile=1.38,
              lower whisker=1.07,
              upper whisker=2.60,
            },
        ] coordinates {};
        \addplot+ [
            color = black,
            boxplot prepared={
              lower quartile=1.46,
               median = 1.54,
              upper quartile=2.09,
              lower whisker=1.31,
              upper whisker=3.28,
            },
        ] coordinates {};
                \addplot+ [
            color = black,
            boxplot prepared={
              lower quartile=1.64,
               median = 1.759,
              upper quartile=3.08,
              lower whisker=1.52,
              upper whisker=3.39,
            },
        ] coordinates {};
        \addplot+ [
            color = black,
            boxplot prepared={
              lower quartile=1.88,
               median = 2.79,
              upper quartile=3.45,
              lower whisker=1.69,
              upper whisker=3.97,
            },
        ] coordinates {};
        \addplot+ [
            color = black,
            boxplot prepared={
              lower quartile=2.32,
               median = 3.36,
              upper quartile=4.06,
              lower whisker=2.07,
              upper whisker=4.63,
            },
        ] coordinates {};
            \addplot+ [
            color = black,
            boxplot prepared={
              lower quartile=4.97,
               median = 6.56,
              upper quartile=6.77,
              lower whisker=4.48,
              upper whisker=7.25,
            },
        ] coordinates {};
        \end{axis}
    \end{tikzpicture}
    \caption{Effect of varying the input's dimension $n$ for a NN architecture $[n, 20, 20, 1]$ on the execution time our tool. We set $L = 2$, $I_n = [-1, 1]^n$, and $Lin = 0$. The weights and biases are generated randomly following uniform distribution between $-5$ and $5$. The reported results are generated for $50$ experiments.}
    \label{fig:varying_dims_time}
\end{figure}

\subsubsection{The effect of increasing the number of neurons per layer:}

We study the effect of varying the number of neurons per layer $N_e$, for a fixed NN architecture $[3, N_e, N_e, 1]$ on the execution time of our tool. Figure \ref{fig:varying_N_e_time} summarizes the execution times with a varying number of neurons per layer. The results show that increasing the number of neurons per layer highly affects the execution time. This is due to the expensive arithmetic and memory operations for large tensors that represent the Bernstein polynomials. Nevertheless, this increase in execution time can be harnessed by using multiple GPUs to compute bounds for different nodes in parallel along with using the same GPU to process multiple nodes simultaneously.  

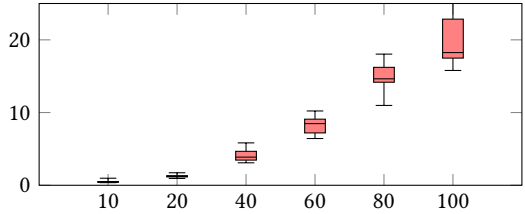
\begin{figure}
    \centering
    \begin{tikzpicture}
        \begin{axis} [
            title=Execution time (seconds) vs number of neurons per layer $N_e$,
            height=4.0cm, width=8.0cm,
            xmin=0, xmax=7, 
            xtick={1,2,3,4,5,6},
            xticklabels={10, 20, 40, 60, 80, 100},
            legend style={at={(0.5,-0.2)}, anchor=north, legend columns=-1},
            ymin=0,ymax=25,
            boxplot/draw direction=y,
            /pgfplots/boxplot/box extend=0.3,
            boxplot/every box/.style={fill=red!50},
            cycle list={{red},{blue}},
        ]
        \addplot+ [
            color = black,
            boxplot prepared={
              lower quartile=0.42,
               median = 0.45,
              upper quartile=0.51,
              lower whisker=0.36,
              upper whisker=0.97,
            },
        ] coordinates {};
            \addplot+ [
            color = black,
            boxplot prepared={
              lower quartile=1.15,
               median = 1.22,
              upper quartile=1.32,
              lower whisker=0.94,
              upper whisker=1.72,
            },
        ] coordinates {};
        \addplot+ [
            color = black,
            boxplot prepared={
              lower quartile=3.46,
               median = 3.87,
              upper quartile=4.67,
              lower whisker=3.09,
              upper whisker=5.83,
            },
        ] coordinates {};
        \addplot+ [
            color = black,
            boxplot prepared={
              lower quartile=7.20,
               median = 8.48,
              upper quartile=9.09,
              lower whisker=6.43,
              upper whisker=10.22,
            },
        ] coordinates {};
        \addplot+ [
            color = black,
            boxplot prepared={
              lower quartile=14.17,
               median = 14.64,
              upper quartile=16.22,
              lower whisker=10.98,
              upper whisker=18.04,
            },
        ] coordinates {};
       \addplot+ [
            color = black,
            boxplot prepared={
              lower quartile=17.49,
               median = 18.25,
              upper quartile=22.85,
              lower whisker=15.79,
              upper whisker=29.14,
            },
        ] coordinates {};
        \end{axis}
    \end{tikzpicture}
    \caption{Effect of varying the number of neurons per layer $N_e$ for a NN architecture $[2, N_e, N_e, 1]$ on the execution time of our tool. We set $n = 2$, $L = 2$, $I_n = [-1, 1]^n$, and $Lin = 0$. The weights and biases are generated randomly following uniform distribution between $-5$ and $5$. The reported results are generated for $50$ experiments.}
    \label{fig:varying_N_e_time}
\end{figure}

\subsubsection{The effect of increasing the number of hidden layers:}

We study the effect of varying the number of hidden layers $n_h$, with $20$ neurons in every hidden layer, on the execution time of our tool. Unlike the effect of increasing the number of neurons per layer, the results in Figure \ref{fig:varying_n_h_time} show that the execution time almost grows linearly with the number of hidden layers.

\begin{figure}
    \centering
    \begin{tikzpicture}
        \begin{axis} [
            title=Execution time (seconds) vs number of layers $n_h$,
            height=4.0cm, width=8.0cm,
            xmin=0, xmax=6, 
            xtick={1,2,3,4,5},
            xticklabels={1, 2, 3, 4, 5},
            legend style={at={(0.5,-0.2)}, anchor=north, legend columns=-1},
            ymin=0,ymax=7,
            boxplot/draw direction=y,
            /pgfplots/boxplot/box extend=0.3,
            boxplot/every box/.style={fill=red!50},
        ]
       \addplot+ [
            color = black,
            boxplot prepared={
              lower quartile=0.27,
               median = 0.28,
              upper quartile=0.31,
              lower whisker=0.25,
              upper whisker=0.63,
            },
        ] coordinates {};
        \addplot+ [
            color = black,
            boxplot prepared={
              lower quartile=1.16,
               median = 1.27,
              upper quartile=1.37,
              lower whisker=0.95,
              upper whisker=2.43,
            },
        ] coordinates {};
        \addplot+ [
            color = black,
            boxplot prepared={
              lower quartile=2.102,
               median = 2.291,
              upper quartile=2.903,
              lower whisker=1.91,
              upper whisker=3.7,
            },
        ] coordinates {};
        \addplot+ [
            color = black,
            boxplot prepared={
              lower quartile=2.97,
               median = 3.23,
              upper quartile=4.11,
              lower whisker=2.60,
              upper whisker=4.88,
            },
        ] coordinates {};
        \addplot+ [
            color = black,
            boxplot prepared={
              lower quartile=3.84,
               median = 4.25,
              upper quartile=4.97,
              lower whisker=3.48,
              upper whisker=6.89,
            },
        ] coordinates {};
        \end{axis}
    \end{tikzpicture}
    \caption{Effect of varying the number of hidden layers $n_h$, for a NN architecture $[2, 20,..,20, 1]$ with $20$ neurons in every hidden layer on the execution time of our tool. We set $n = 2$, $L = 2$, $I_n = [-1, 1]^n$, and $Lin = 0$. The weights and biases are generated randomly following uniform distribution between $-5$ and $5$. The reported results are generated for $50$ experiments.}
    \label{fig:varying_n_h_time}
\end{figure}
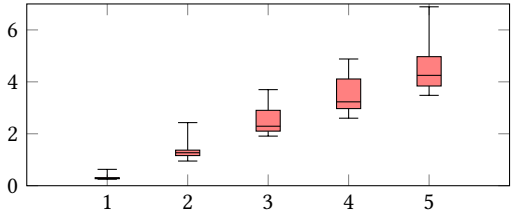

\subsubsection{Scalability analysis of Bern-NN:}
We finally try to study the execution time of Bern-NN for relatively large neural networks. In this study, we add extra layers with 100 neurons each and report the execution time in Figure~\ref{fig:scalability} for random neural networks. As shown in the figure, Bern-NN can process neural networks with more than 1000 neurons in less than 2 minutes.

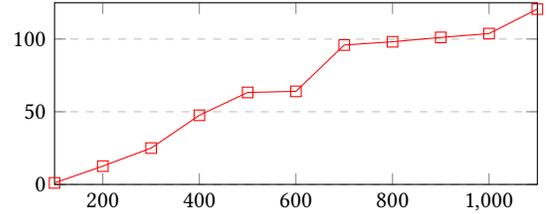
\begin{figure}
    \centering

\begin{tikzpicture}
\begin{axis}[
    title={Execution time (seconds) vs total number of neurons},
    xmin=100, xmax=1100,
    ymin=0, ymax=125,
    ymajorgrids=true,
    grid style=dashed,
    height=4.0cm, width=8.0cm,
]

\addplot[
    color=red,
    mark=square,
    ]
    coordinates {
    (100,1.07)(200,12.6)(300,25.01)(400,47.58)(500,63.27)(600,64.01)(700,95.894)(800,98.13)(900,101.11)(1000,103.75)(1100,120.6)
    };
    
\end{axis}
\end{tikzpicture}
\caption{Scalability of the Bern-NN tool as a function of increasing the total number of neurons.}
    \label{fig:scalability}
\end{figure}

\subsection{Comparison against other tools}
In this subsection, we compare the performance of our tool in terms of execution time and the output set's relative volume compared to bound propagation tools such as Symbolic Interval Analysis (SIA)\cite{wang2018efficient}, alpha-CROWN~\cite{xu2020fast}, and reachability analysis tool such as POLAR~\cite{huang2022polar}. We note that alpha-CROWN~\cite{xu2020fast} was the winner of the 2022 VNN competition and we compare Bern-NN against the bound propagation algorithm used within alpha-CROWN as a representative tool for all the bound propagation techniques. Moreover, alpha-CROWN is also designed to harness the computational powers of GPUs. We compare Bern-NN against POLAR since it also uses polynomials (Taylor Model with a Bernstein error correction) to compute bounds on the output of neural networks. POLAR~\cite{huang2022polar} outperforms other reachability-based tools and hence is a representative tool for such techniques.

\subsubsection{Comparison against SIA and alpha-CROWN for random NN}
We compare the performance of our tool to SIA and alpha-CROWN for random neural networks with $[2, 20, 20, 1]$ architecture for different hyperrectangle input spaces (Figure \ref{fig:compare_tool_inputs_time}). We also compare the performance as the input dimension of the network increases (Figure \ref{fig:compare_tool_dims_time}).
%
The results show that SIA is the fastest in terms of execution time for all different input hyperrectangles due to the simplicity of its computations. However, its relative volume is the highest. On the other hand, Bern-NN's relative volume is the smallest for all different input spaces thanks to its tight higher-order ReLU approximations. Compared to alpha-CROWN (which also runs on GPUs), Bern-NN is both faster and produces tighter bounds leading to an average of $25\%$ reduction in execution time with an average of $10\%$ reduction in the relative volume metric. This shows the practicality of Bern-NN for control applications.

\begin{figure}
    \centering
    \pgfplotsset{width=8cm,height=4cm,compat=1.8}
\begin{tikzpicture}
\begin{axis}[
    ybar,
    enlargelimits=0.15,
    legend style={at={(0.5,1.3)},
      anchor=north,legend columns=-1},
    ylabel={Average execution time (sec)},
    symbolic x coords={input1,input2,input3,input4},
    xtick={input1,input2,input3,input4},
    ]
\addplot coordinates {(input1,0.01) (input2,0.01) (input3,0.01) (input4,0.01)};
\addplot coordinates {(input1,3.52) (input2,3.54) (input3,3.52) (input4,3.53)};
\addplot coordinates {(input1,2.42) (input2,2.50) (input3,2.47) (input4,2.53)};
\legend{SIA,alpha-CROWN,BERN-NN}
\end{axis}
\end{tikzpicture}
\begin{tikzpicture}
\begin{axis}[
    ybar,
    enlargelimits=0.15,
    legend style={at={(0.5,-0.15)},
      anchor=north,legend columns=-1},
    ylabel={Average relative volume},
    symbolic x coords={input1,input2,input3,input4},
    xtick={input1,input2,input3,input4},
    ]
\addplot coordinates {(input1,14974.69) (input2,31073.72) (input3,63195.62) (input4,132461.28)};
\addplot coordinates {(input1,10455.48) (input2,21724.04) (input3,44437.34) (input4,89442.67)};
\addplot coordinates {(input1,9026.93) (input2,18533.17) (input3,37515.83) (input4,77732.05)};
\end{axis}
\end{tikzpicture}    
    \caption{Performance results in terms of average execution times (top) and relative volume (bottom) for BERN-NN, SIA, and alpha-CROWN for different input spaces. The NN's architecture is $[2, 20, 20, 1]$. The ReLU's order of approximation is $L = 4$, and $Lin = 0$. The weights and biases are generated randomly following uniform distribution between $-5$ and $5$. Input1 = $I_n = [-5, 5]^2$, Input2 = $I_n = [-10, 10]^2$, Input3 = $I_n = [-20, 20]^2$, Input4 = $I_n = [-40, 40]^2$.} 

    \label{fig:compare_tool_inputs_time}
\end{figure}
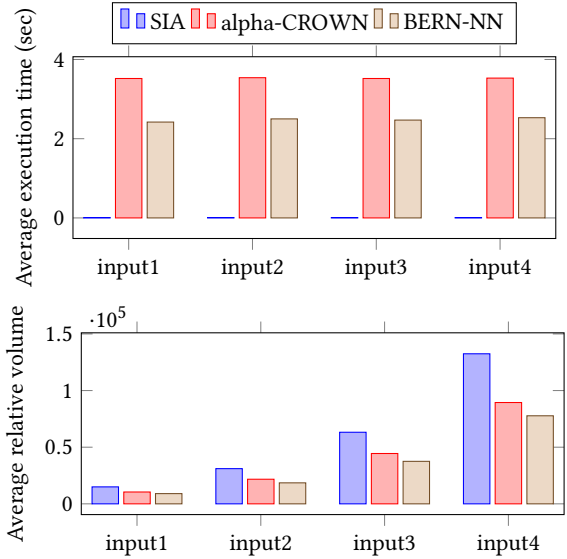



\begin{figure}
    \centering
    \pgfplotsset{width=8cm,height=4cm,compat=1.8}
\begin{tikzpicture}
\begin{axis}[
    ybar,
    enlargelimits=0.15,
    legend style={at={(0.5,1.3)},
      anchor=north,legend columns=-1},
    ylabel={Average execution time (sec)},
    symbolic x coords={dim1,dim2,dim3},
    xtick={dim1,dim2,dim3},
    ]
\addplot coordinates {(dim1,0.01) (dim2,0.01) (dim3,0.01)};
\addplot coordinates {(dim1,3.46) (dim2,3.50) (dim3,3.54)};
\addplot coordinates {(dim1,2.44) (dim2,3.07) (dim3,5.04)};
\legend{SIA,alpha-CROWN,BERN-NN}
\end{axis}
\end{tikzpicture}
\begin{tikzpicture}
\begin{axis}[
    ybar,
    enlargelimits=0.15,
    legend style={at={(0.5,-0.15)},
      anchor=north,legend columns=-1},
    ylabel={Average relative volume},
    symbolic x coords={dim1,dim2,dim3},
    xtick={dim1,dim2,dim3},
    ]
\addplot coordinates {(dim1,31073.72) (dim2,48123.56) (dim3,61135.41)};
\addplot coordinates {(dim1,21724.04) (dim2,34304.63) (dim3,43285.77)};
\addplot coordinates {(dim1,18533.17) (dim2,25924.55) (dim3,31942.63)};
\end{axis}
\end{tikzpicture}    
    \caption{Performance results in terms of average execution times (top) and relative volume (bottom) for BERN-NN, SIA, and alpha-CROWN for input's dimensions $n$. The NN's architecture is $[n, 20, 20, 1]$. the input's space is $[-10, 10]^n$. The ReLU's order of approximation is $L = 4$, $Lin = 0$. The weights and biases are generated randomly following uniform distribution between $-5$ and $5$. dim1 = $n = 2$, dim2 = $n = 3$, dim3 = $n = 4$.}
    \label{fig:compare_tool_dims_time}
\end{figure}
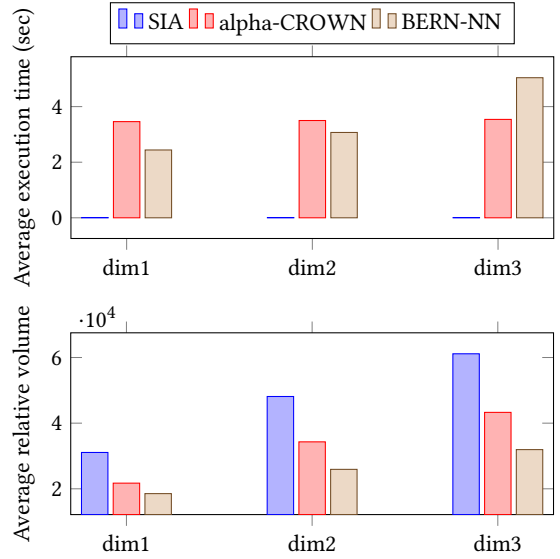


\subsubsection{Case Study for Control Benchmarks} In this experiment, we test different tools on benchmarks of NN controllers (used by POLAR) to evaluate the tightness of their estimated bounds. Table \ref{tab:bench_arch} shows the architecture of the networks used in each benchmark.
Table \ref{tab:tool_comparison_polar} summarizes the performance of the tools with respect to the average execution time and average relative volume for six control benchmarks. The results show that Bern-NN provides the tightest estimate for the output set for all benchmarks except Benchmark 3. We would like to highlight that the tight approximation provided by Bern-NN is important for control applications because the specification of interest is usually defined over a time horizon and require multi-step reachability, hence, tighter bounds at each step are crucial. Lastly, Bern-NN is faster than alpha-CROWN over all benchmarks except Benchmark 5. However, SIA and POLAR are faster than Bern-NN but provide looser bound estimates. Each benchmark is run with five different hyperrectangles that are all centered around zero and have a radius $r \in \{1,1.5,2,2.5,3\}$.

\begin{table*}[ht] 
\caption{Performance results in terms of average execution times and volume for BERN-NN, SIA, alpha-CROWN, and POLAR, for $5$ different input's spaces $I_n\left(\underline{d}, \overline{d}\right)$ for $6$ benchmarks \cite{huang2022polar}. The ReLU's order of approximation is $L = 2$, $Lin = 0$.}

\label{tab:tool_comparison_polar}
\begin{adjustbox}{width=\textwidth,center}
\begin{tabular}{|c|c|c|c|c|c|c|c|c|c|c|c|c|}
    \hline
     \multirow{2}{*}{Tool} & \multicolumn{2}{c|}{Benchmark 1} & \multicolumn{2}{c|}{Benchmark 2} & \multicolumn{2}{c|}{Benchmark 3} & \multicolumn{2}{c|}{Benchmark 4} & \multicolumn{2}{c|}{Benchmark 5} & \multicolumn{2}{c|}{Benchmark 6}  \\
    \cline{2-13}
    &   time & volume & time & volume & time & volume & time & volume & time & volume & time & volume \\
    \hline
    \hline
     $SIA$ & $\bm{0.01}$ & $2.544$ & $\bm{0.02}$ & $6.05$ & $\bm{0.01}$ & $1.02$ & $\bm{0.01}$ & $9.41$ & $\bm{0.02}$ & $53.38$ & $\bm{0.02}$ & $2.03$ \\
   \hline
     $CROWN$  & $2.9$ & $3.1$ & $3.49$ & $5.50$ & $3.54$ & $\bm{0.73}$ & $3.13$ & $17.04$ & $3.80$ & $77.72$ & $4.10$ & $2.4$\\
    \hline 
     $Bern-NN$ & $0.84$ & $ \bm{1.62}$ & $1.30$ & $\bm{5.4}$ & $1.09$ & $0.81$ & $1.15$ & $\bm{6.21}$ & $41.7$ & $\bm{35.85}$ & $3.25$ & $\bm{1.38}$\\
     \hline 
     $POLAR$ & $0.21$ & $25.43$ & $0.284$ & $51.80$ & $0.29$ & $18.81$ & $0.42$ & $33.32$ & $5.52$ & $432.75$ & $0.81$ & $7.00$\\
     \hline 
\end{tabular}
\end{adjustbox}
\end{table*}


\begin{table}[]
\caption{Architectures of POLAR Benchmarks}
\label{tab:bench_arch}
\resizebox{0.6\columnwidth}{!}{%
\begin{tabular}{|l|l|}
\hline
            & Architecture       \\ \hline
Benchmark 1 & {[}2,20,20,1{]}    \\ \hline
Benchmark 2 & {[}2,20,20,1{]}    \\ \hline
Benchmark 3 & {[}2,20,20,1{]}    \\ \hline
Benchmark 4 & {[}3,20,20,1{]}       \\ \hline
Benchmark 5 & {[}3,100,100,1{]}    \\ \hline
Benchmark 6 & {[}4,20,20,20,1{]} \\ \hline
\end{tabular}%
}
\end{table}

\section{Conclusion}

In conclusion, we presented Bern-NN, a tool for computing higher-order tight bounds for NNs by approximating non-linear ReLU activations using Bernstein polynomials. We provided GPU-based computational machinery to handle tensor arithmetic for manipulating polynomials as well as bounding them using the properties of Bernstein polynomials. 
We conducted extensive experiments to evaluate the scalability of our tool as well as compare its estimated bounds with state-of-the-art methods. The results showed that our tool can process neural networks with thousands of neurons in a few minutes. These results also show that our tool outperforms state-of-the-art tools in terms of computing tighter bounds while reducing the execution time compared to other tools.

\bibliographystyle{ACM-Reference-Format}
\bibliography{Bibliography, Hkhedr}

\end{document}